\newcommand{\spanv}[1]{\textrm{sp}(#1)}
\newcommand{\adv}[2]{\textrm{adv}(#1,#2)}
\DeclareMathOperator*{\argmax}{arg\,max}
\DeclareMathOperator{\E}{\mathbb{E}}
\DeclareMathOperator{\st}{\mathrm{st}}
\theoremstyle{plain}
\newtheorem{theorem}{Theorem}[section]
\newtheorem{lemma}[theorem]{Lemma}
\newtheorem{corollary}[theorem]{Corollary}
\theoremstyle{definition}
\newtheorem{definition}[theorem]{Definition}
\newtheorem{assumption}[theorem]{Assumption}
\theoremstyle{remark}
\title{Revisiting Value Iteration: Unified Analysis of Discounted and Average-Reward Cases 
}
\author{Arsenii Mustafin\textsuperscript{1}, Xinyi Sheng\textsuperscript{1}, Dominik Baumann\textsuperscript{1}}
\keywords{Reinforcement Learning Theory, Markov Decision Processes, Algorithm Convergence Analysis}
\begin{document}

\makeCover  
\maketitle  

\begin{abstract}

While Value Iteration (VI) is one of the most fundamental algorithms in 
Reinforcement Learning, its theoretical convergence guarantees still exhibit 
a persistent mismatch with empirical behavior. In the discounted-reward case, 
classical theory guarantees geometric convergence with rate $\gamma$, while 
in the average-reward case recent work suggests that only sublinear 
convergence can be expected. In practice, however, VI is often observed to 
converge significantly faster. In this work, we show through a unified 
geometry-based analysis that, under an assumption of a unique and unichain 
optimal policy, \emph{(i)} convergence is geometric in \emph{both} the 
discounted- and average-reward settings and \emph{(ii)} the convergence rate 
is faster than previous analyses suggest.
\end{abstract}

\section{Introduction} \label{sec:intro}

Value Iteration (VI), introduced by Bellman in the late 1950s, is one of the oldest and most fundamental algorithms for solving Markov Decision Processes (MDPs).
However, as we argue in this paper, its convergence properties are still not fully understood. 
\citet{howard1960dynamic} showed that, in discounted reward MDPs, VI converges geometrically in the $\ell^\infty$ norm with rate equal to the discount factor $\gamma$, that this rate is tight in the worst case, and that, as $\gamma$ approaches 1, worst-case bounds become sublinear.
The understanding of VI's convergence is still largely based on this early result. 
Moreover, \citet{lee2025optimal} suggest that in the average-reward setting with $\gamma=1$, sublinear convergence is not only a worst-case bound, but also optimal.
Thus, the case appears to be closed.
Nevertheless, in this paper, we show that this impression is incomplete and potentially misleading.



Let us consider an instructive example, which is inspired by examples commonly used in RL theory lectures (e.g., by \cite{RLTheoryLec3}).
We have an MDP with three groups of states, ``heaven,'' ``purgatory,'' and ``hell.''
Each group itself consists of three states.
Intuitively, heaven is the goal, forming a rewarding cycle under the optimal policy, with small probabilities of slipping into purgatory or hell. 
The full specification is given in Appendix~\ref{app:mdp_example}.


Suppose we use VI to solve this MDP.
Given the discussion above, we should expect geometric convergence with rate $\gamma$ in the discounted reward case and sublinear convergence as $\gamma$ approaches 1.
Nevertheless, when running VI on this MDP and plotting the span seminorm of the error vector (normalized by its initial value) for different $\gamma$, Figure~\ref{fig:VI_conv_example}(A) shows that, even as $\gamma$ approaches 1, the convergence rate does not become sublinear.
In fact, we still see geometric convergence with a rate strictly smaller than 1.
This result might, of course, be an artifact of a carefully handcrafted MDP.
Thus, Figure~\ref{fig:VI_conv_example}(B) shows the convergence behavior of VI on a random MDP, which exhibits the same characteristics.
Existing literature cannot explain these empirical results.

In this paper, we fill this gap through a unified analysis of the discounted and average-reward settings, which are typically treated separately, yielding new convergence rates for both.
This has direct practical relevance. 
For instance, in modern reinforcement learning (RL), VI underlies the critic update in actor--critic methods, often implemented with neural networks as function approximators. 
When practitioners observe slow convergence, it is unclear whether this stems from approximation error, optimization issues, or the fundamental convergence behavior of VI itself. 
Sharp theoretical guarantees, as we provide in this paper, enable us to disentangle these sources of slowness.

\paragraph{Contributions.}
In this paper, we provide a new analysis of the VI algorithm convergence that unifies the discounted and average-reward settings. Our main contributions are as follows:
\begin{itemize}
    \item We show that in the discounted-reward case under the assumption that the unique optimal policy is unichain, VI exhibits geometric convergence with a rate $\iota \gamma < \gamma$ in terms of the span seminorm and, therefore, the total number of iterations required to obtain an $\epsilon$-optimal policy is 
    $$ \mathcal{O}\left( \frac{\log(1/\epsilon) + \log(1/(1-\gamma))}{\log(1/\gamma) + \frac{\log(1/\iota)}{n^2} } \right) $$
    \item We show that in the average-reward case, under the same assumption, the VI algorithm exhibits geometric convergence with a rate $\iota $ in terms of the span seminorm, which implies that a gain $\epsilon$-optimal policy is guaranteed to be achieved after  
    $$\mathcal{O}\left( \frac{\log(1/\epsilon)}{\frac{\log(1/\iota)}{n^2} } \right)$$
    iterations.
    \item We expand a geometric interpretation of MDPs \citep{mdp_geometry} previously introduced for the discounted reward case onto the average-reward case. We show that in both cases, the Value Iteration algorithm has the same dynamics, which allows us to analyze both cases together.
\end{itemize}

\begin{figure*}[!t] 
\begin{center}
\includegraphics[width=\textwidth]{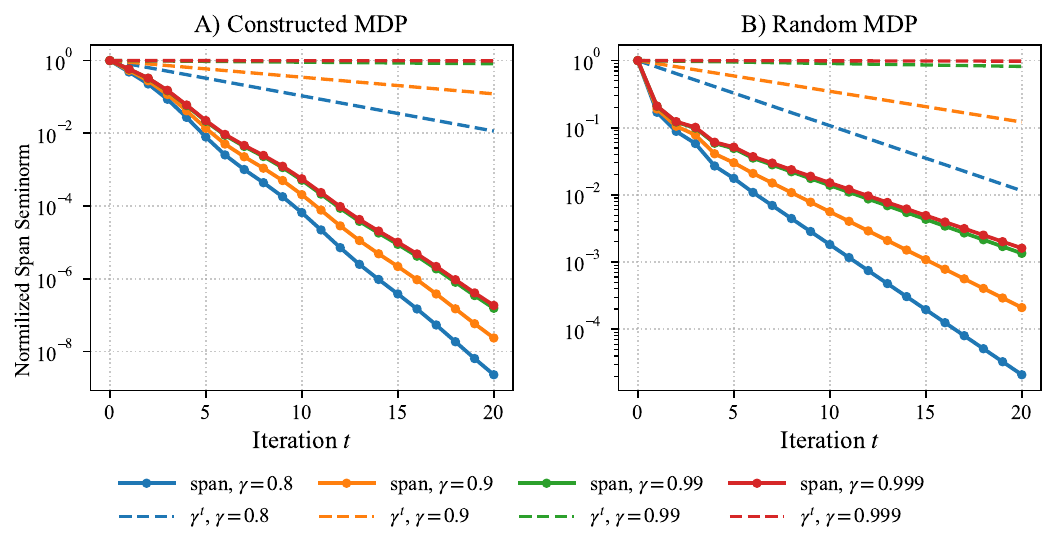}
\end{center}
\caption{Normalized span seminorm versus iteration $t$ (log scale) for four discount factors $\gamma\in\{0.8,0.9,0.99,0.999\}$. 
Subfigure (A) shows an MDP used as an example in Section \ref{sec:intro}; a detailed description is given in Appendix \ref{app:mdp_example}. Subfigure (B) shows a random MDP with 10 states, 1 to 4 actions on each state, and random rewards. 
Solid curves are the empirical spans (each normalized by its initial value); dashed curves are the geometric references $\gamma^{t}$ in the same colors as their corresponding spans. 
Across both MDPs, as $\gamma$ approaches 1, the empirical spans decay rate approaches a geometrical convergence with a rate significantly faster than 1.} 
\label{fig:VI_conv_example}
\end{figure*}

\paragraph{Previous Work.} MDPs were introduced in the late 1950s, together with foundational algorithms such as VI \citep{bellmandp} and a comprehensive dynamic programming framework \citep{howard1960dynamic}. Two standard performance criteria emerged: the discounted reward criterion, extensively summarized by \citet{Puterman1994}, and the average-reward criterion, developed through the works of \citet{blackwell1962}, \citet{yushkevich1974class}, \citet{denardo1968multichain}, and characterized further by \citet{schweitzer1978functional,schweitzer1984existence}.

For discounted MDPs, \citet{howard1960dynamic} showed that the convergence rate of VI is upper-bounded by the discount factor $\gamma$, and that this bound is attainable. Subsequent works refined and systematized these results and their assumptions \citep{Puterman1994,feinberg2014value}, establishing VI as a central tool for computing value functions in the discounted case. 
Here, we improve upon those results by showing that VI converges with a rate strictly faster than $\gamma$ in the span seminorm.

For average-reward MDPs, convergence of VI has been extensively studied. In the unichain case, linear convergence rates can be obtained via the $\delta$- and ergodicity coefficients \citep[e.g.,][]{seneta2006nonnegative,huebner1977,puterman2014}, and $J$-stage span contractions yield linear rates every $J$ iterations in the span seminorm \citep{federgruen1978,vanderwal1981,puterman2014}. In the multichain case, normalized VI iterates converge to the optimal average reward, although the policy error may not vanish \citep[Thm.~9.4.1]{puterman2014}. Necessary and sufficient conditions for VI convergence, as well as asymptotic linear rates on the Bellman error, were established by \citet{schweitzer1977,schweitzer1979}. To ensure convergence of iterates to the bias function $h^\star$, \citet{white1963} introduced Relative Value Iteration (RVI), with sufficient conditions studied by \citet{morton1977}, and additional conditions for VI convergence were studied by \citet{dellavecchia2012}.
We go beyond those results by showing geometric convergence of VI also in the average-reward setting with respect to the span seminorm.

Several works have also investigated the relationship between discounted MDPs and average-reward MDPs. In particular, \citet{Puterman1994} reveals a direct connection between discounted and average-reward quantities. 
Connections at the algorithmic level were studied by \citet{tsitsiklis2002average}. 
From an optimality perspective, discounted, average, and Blackwell optimality were unified through sensitive discount optimality \citep{mahadevan1996sensitive}, while \citet{grand2023reducing} showed that average optimality can be reduced to discounted optimality.
These results show the deep connections between discounted and average-reward, but typically treat the two settings through different value representations.

\section{Preliminaries and Problem Statement}
\paragraph{MDP Model.} We employ a standard discrete finite MDP framework \citep{puterman2014markov}, modifying it slightly by redefining the action space for convenience. That is, we define the MDP as the tuple $\mathcal{M} = 
\langle \mathcal{S}, \mathcal{A}, \mathcal{P}, \mathcal{R}, \gamma \rangle$, where 
$\mathcal{S} = \{s_1, \ldots, s_n\}$ is a finite set of $n$ states, and $\mathcal{A}$ is the set of attainable state–--action pairs with cardinality $m$. Following \citet{mdp_geometry}, state–--action pairs are the main object of our analysis; we will use this concept extensively and, for convenience, refer to a state–--action as a \textbf{SAP}. Since every SAP $a$ is tied to exactly one state, we define an auxiliary mapping $\mathrm{st}$ such that $\mathrm{st}(a) = s$ means that the state of SAP $a$ is $s$ (i.e., $a$ can be chosen only in state $s$). Each SAP $a$ is characterized by a transition probability vector $\mathcal{P}(a) = 
(p^a_1, \dots, p^a_n)$, where $p^a_i$ defines the probability of transitioning to state $i$ after selecting $a$, drawn from the set of all transition probabilities $\mathcal{P}$ and a deterministic reward $r^a \in [0,1]$ drawn from the set of all rewards $\mathcal{R}$ (assuming scaled rewards). Lastly, 
$\gamma \in (0,1)$ is a discount factor that specifies how much less a reward is valued if it is received one step later.

Given the set of SAPs, a choice rule (or \textbf{policy}) is a map $\pi: \mathcal{S} \rightarrow \mathcal{A}$ that satisfies $\mathrm{st}(\pi(s)) = s$ for all $s \in \mathcal{S}$. We consider only deterministic stationary policies, which means that for each state $s$, the same action is chosen whenever the state is visited. Because each SAP is tied to a state, we can view a 
\textbf{policy as a set of SAPs}, $\pi = \{a_1, \dots, a_n\}$ with $a_i = \pi(s_i)$. If a policy $\pi$ chooses SAP $a$, we write $a \in \pi$.

A chain is called \textbf{unichain} if it contains a single recurrent class, possibly with an non-empty set of transient states. Otherwise, the MDP is called \textbf{multichain} following \citet{puterman2014}.

For a given MDP $\mathcal{M}$ and policy $\pi$, performance can be measured in two ways corresponding to the discounted and average-reward criteria. We start with the former, as it is slightly simpler.

\paragraph{Discounted MDP.} In the discounted reward case, the value of the policy $\pi$ in state $s$, denoted as $V^\pi(s)$, is defined as the expected discounted reward obtained by starting from $s$ and following $\pi$,$
V^\pi(s) = \E \left[ \sum_{t=0}^{\infty} \gamma^t r_t \middle| s\right],$
where $r_t$ is the reward obtained at time $t$. The vector of values $V^\pi$ is the unique vector satisfying the Bellman equation $ V^\pi(s) = r^{\pi(s)} +  \gamma \sum_{s'}p_{s'}^{\pi(s)} V^\pi(s') $. This equation can also be written in matrix form as $V^\pi = R^{\pi} +  \gamma P^\pi V^\pi$, where $R^{\pi}$ is a vector of all rewards of policy $\pi$, so that the $i$th entry  $R^\pi(i) = r^{\pi(i)}$; and $P^\pi$ is a transition kernel of the Markov chain induced by policy $\pi$. It is a square matrix whose $i$th row is equal to $P^\pi(i)=\mathcal{P}(\pi(i))=(p^{\pi(i)}_1, \dots, p^{\pi(i)}_n)$.

In the discounted reward case, the optimal policy $\pi^*$ is a policy that attains the maximum values in every state: $V^{\pi^*} (s) \ge V^\pi(s)$ for any $ \pi$ and  $s$. We also often interested in approximate solution, or an $\epsilon$-optimal policy $\pi^\epsilon$ ($\epsilon >0$) such that $V^{\pi^*} (s) -  V^{\pi^\epsilon}(s) <\epsilon $.

\paragraph{Average-reward MDP.} In the average-reward case, we set $\gamma=1$, i.e., all rewards are equally important to the agent, independent of when they are received. This case is considered more challenging, as tickling infinite values involves two quantities per policy: the \emph{gain} and the \emph{bias}. 

For a given policy $\pi$, the \emph{average reward} (or \emph{gain}) from state $s$ is basically equivalent to the values in the discounted case with $\gamma=1$, but we average along time, expressed as $\rho^\pi(s) \;=\; \lim_{T\to\infty} \frac{1}{T}\,\E\!\left[ \sum_{t=0}^{T-1} r_t \,\middle|\, s \right]$. If the MDP is unichain, then the average reward is state-independent, i.e., $\rho^\pi(s)\equiv\rho^\pi$ for any start state $s\in\mathcal{S}$ \citep{mahadevan1996average}. The \emph{relative value function} (or \emph{bias}) from state $s$ measures the difference between the actual rewards under $\pi$ starting from $s$ and the gain, $h^\pi(s) \;=\; \E\!\left[ \sum_{t=0}^\infty \bigl( r_t - \rho^\pi \bigr) \,\middle|\, s \right].$



A policy $\pi^*$ is \emph{optimal} in the average-reward case if \emph{(i)} it has the highest possible gain, i.e., $\forall\,\pi,\,s:\ \rho^{\pi^*}(s)\ge \rho^\pi(s)$; and \emph{(ii)} it has the highest bias with respect to the optimal gain, i.e., $\forall\,\pi,\,s:\ h^{\pi^*}(s)\ge h^\pi(s)$. An $\epsilon$-optimal gain policy $\pi^\epsilon$ is a policy such that its 
gain at each state is at most $\epsilon$ worse than the optimal gain, i.e.,
$\rho^{\pi^*}(s) - \rho^{\pi^\epsilon}(s) \le \epsilon \qquad \forall s$.

\paragraph{Advantages and Value Iteration Algorithm.} 
A fundamental algorithm for solving MDPs, and the primary object of study in this paper, is the \textbf{Value Iteration} (VI) algorithm, which starts from an arbitrary value vector $V_0$ and then iteratively performs the update
\begin{equation} \label{eq:vi_update}
V_{t+1}(s) \;=\; \max_{\mathrm{st}(a)=s} \left\{\, r^a \;+\; \gamma \sum_{i=1}^n p^a_i\, V_t(i) \right\}.
\end{equation}

Alternatively, VI can be expressed in terms of \textbf{advantages} \citep{mdp_geometry}, thereby simplifying the subsequent analysis.
Thus, let us next introduce the concept of advantages.
The advantage of a SAP $a$ at state $s$ with respect to a policy $\pi$ is defined as the gain (or loss) caused by a one-time deviation from policy $\pi$ by choosing SAP $a$ at state $s$ instead of following $\pi$ starting from $s$.
In the discounted-reward case, advantages are always well-defined and given by
\begin{equation} \label{eq:def_advantage}
\adv{a}{\pi} \;=\; r^a \;+\; \gamma \sum_{i=1}^n p^a_i\, V^\pi(i) \;-\; V^\pi(s),
\end{equation}
and this quantity exists for all SAPs and for all policies and pseudo-policies.

In the average-reward case, we define the advantage function as \citep{zhang2020average,adamczyk2025average}: $ \adv{a}{\pi} = Q^\pi(s,a) - h^\pi(s) $, where 
$$Q^\pi(s,a)=\E \left[ \sum_{t=0}^\infty \left( r_t - \rho^\pi \right) \mid s,a \right] = r^a - \rho^\pi +\sum_{s'} p^{\pi(s)}_{s'} h^{\pi}(s').$$ 
Then,
\begin{align*}
    \adv{a}{\pi} = r^a - \rho^\pi +\sum_{s'}p^{a}_{s'} h^{\pi}(s') - h^\pi(s).
\end{align*}

Expressing VI in terms of advantages yields \citep{mdp_geometry}
\begin{equation}
V_{t+1}(s) \;=\; V_t(s) \;+\; \max_{\mathrm{st}(a)=s} \, \adv{a}{V_t},
\end{equation}
and we will adopt this perspective here, as it aligns naturally with the geometric view.

Finally, we analyze the convergence of VI with respect to the vector span seminorm, defined as the difference between the maximum and minimum entries of a vector. For a value vector $V$, we denote it by $\spanv{V} = \max_i V(i) - \min_j V(j)$.

\paragraph{Geometric interpretation of MDPs.} The discounted-reward MDP model 
presented above was reinterpreted in geometric terms by 
\citet{mdp_geometry}, and our analysis relies on this interpretation. In 
this framework, key objects of an MDP---SAPs and policies---are viewed 
as points and hyperplanes in a linear space called the \emph{action space}, which we make precise below. 
Consequently, the dynamics of VI can be interpreted as the movement of a 
hyperplane in this space.

In this paper, we extend this interpretation to the average-reward case. 
We show that, although the definition of values must be modified so that 
they still correctly define a hyperplane, the two settings are geometrically 
equivalent, which enables a unified analysis.

Before, we briefly introduce the geometric interpretation and define the 
quantities required for our analysis; we refer the reader to the original 
paper for a detailed exposition~\citep{mdp_geometry}.

The action space is an $(n+1)$-dimensional space, where $n$ is the number of 
states. One coordinate, referred to as the $0$-th coordinate or the 
\emph{height}, is special and corresponds to SAP rewards and policy values. 
To construct the \textbf{action vector} $a^+=(c_0,\dots,c_n)$ from a SAP $a$ 
on state $s$ ($\st(a)=s$), we set its first entry (the $0$-th coordinate) to 
the SAP reward: $c_0=r^a$. The next $n$ coordinates correspond to states and 
are defined as $c^a_i=\gamma p^a_i$ for $i\ne s$ and 
$c^a_s=\gamma p^a_s-1$. Hence, the sum of the last $n$ coordinates satisfies 
$\sum_{i=1}^n c^a_i=\gamma-1$, with the $s$-th entry negative and all others 
positive.

For a policy $\pi$, the \textbf{policy vector} 
$V^\pi_+=(1,V^\pi(1),\dots,V^\pi(n))^\top$ concatenates the state values with 
a $1$ in the $0$-th coordinate. Geometrically, $\pi$ is represented by the 
hyperplane $\mathcal{H}^\pi$ consisting of all vectors orthogonal to 
$V^\pi_+$ (with respect to the standard Euclidean inner product), which is 
the span of the action vectors forming the policy. Such a hyperplane can be 
constructed for any value vector $V_+$, i.e., we do not require an explicit 
policy; for example, we can form a hyperplane for $V_t$ produced by $t$ 
iterations of VI. Hyperplanes arising from value vectors without associated 
policies are called \textbf{pseudo-policies}.

The key property of action and policy vectors is that for any action vector 
$a^+$ and policy vector $V^\pi_+$, the inner product $a^+ V^\pi_+$ equals the 
advantage of $a$ under $\pi$ and corresponds to the oriented vertical 
distance from $a^+$ to $\mathcal{H}^\pi$. An action-space example for a 
two-state MDP is shown in Figure~\ref{fig:first}.

One tool provided by the geometric interpretation that we use here is the 
equivalence transformation $\mathcal{L}^\delta_s$. This transformation 
shifts all policy values at state $s$ by $\delta$, while preserving 
advantages, and therefore does not affect the dynamics of the VI algorithm. 
Consequently, instead of considering the original MDP $\mathcal{M}$, we may 
consider its \textbf{normalization} $\mathcal{M}^*$, obtained by a sequence 
of $\mathcal{L}$ transformations so that all optimal-policy values are equal 
to $0$. This transformation does not change the dynamics of VI (provided the 
initial values are adjusted accordingly), but simplifies the analysis of the 
MDP.

\section{Unified Geometric Interpretation and New Policy Values} \label{sec:new geo interpr with lemmas}

\begin{figure}[t]
  \centering
  \begin{subfigure}{0.48\textwidth}
    \centering
    \includegraphics[width=\linewidth]{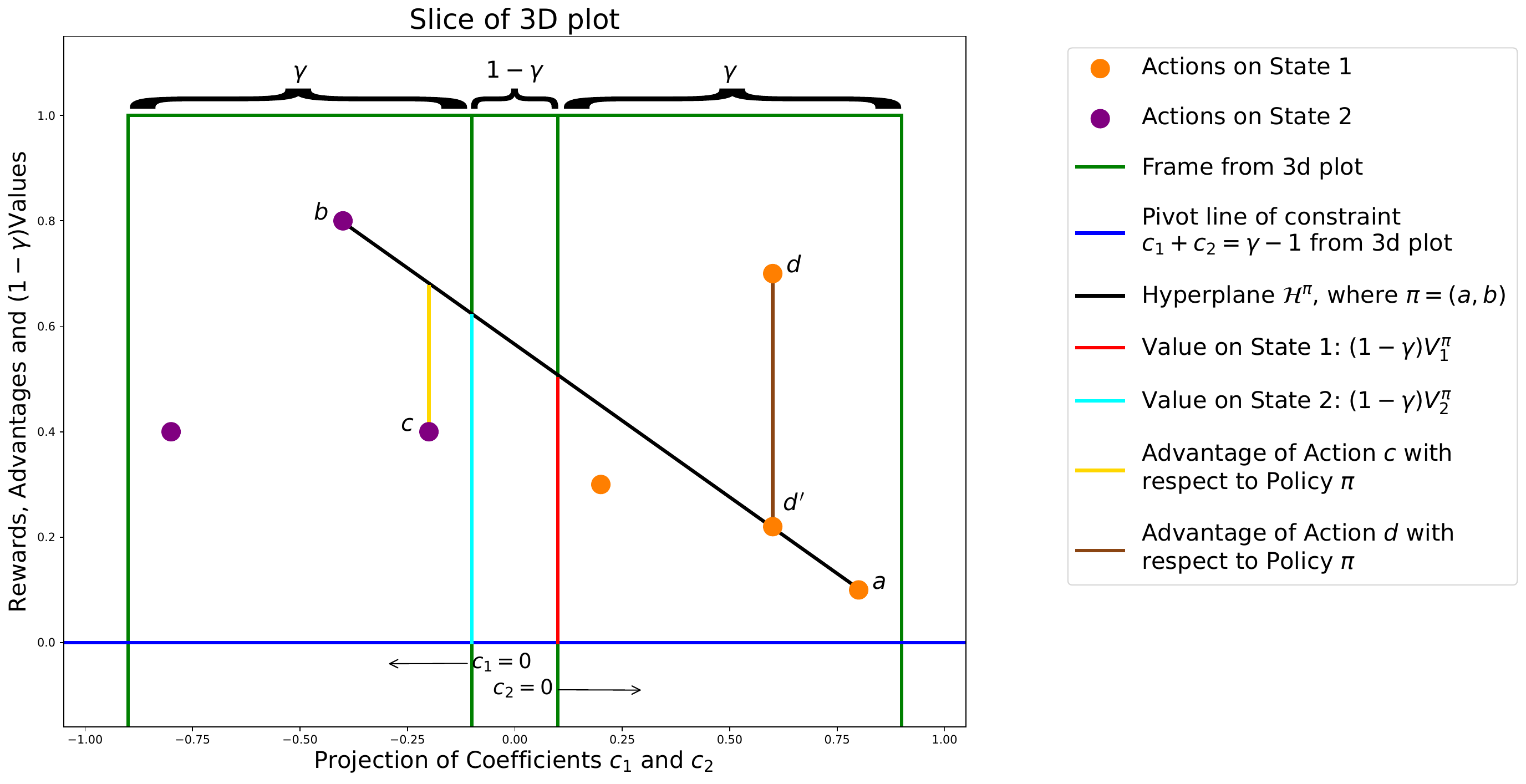}
    \caption{Example of a 2-State MDP in an old interpretation in the discounted reward case.}
    \label{fig:first}
  \end{subfigure}\hfill
  \begin{subfigure}{0.48\textwidth}
    \centering
    \includegraphics[width=\linewidth]{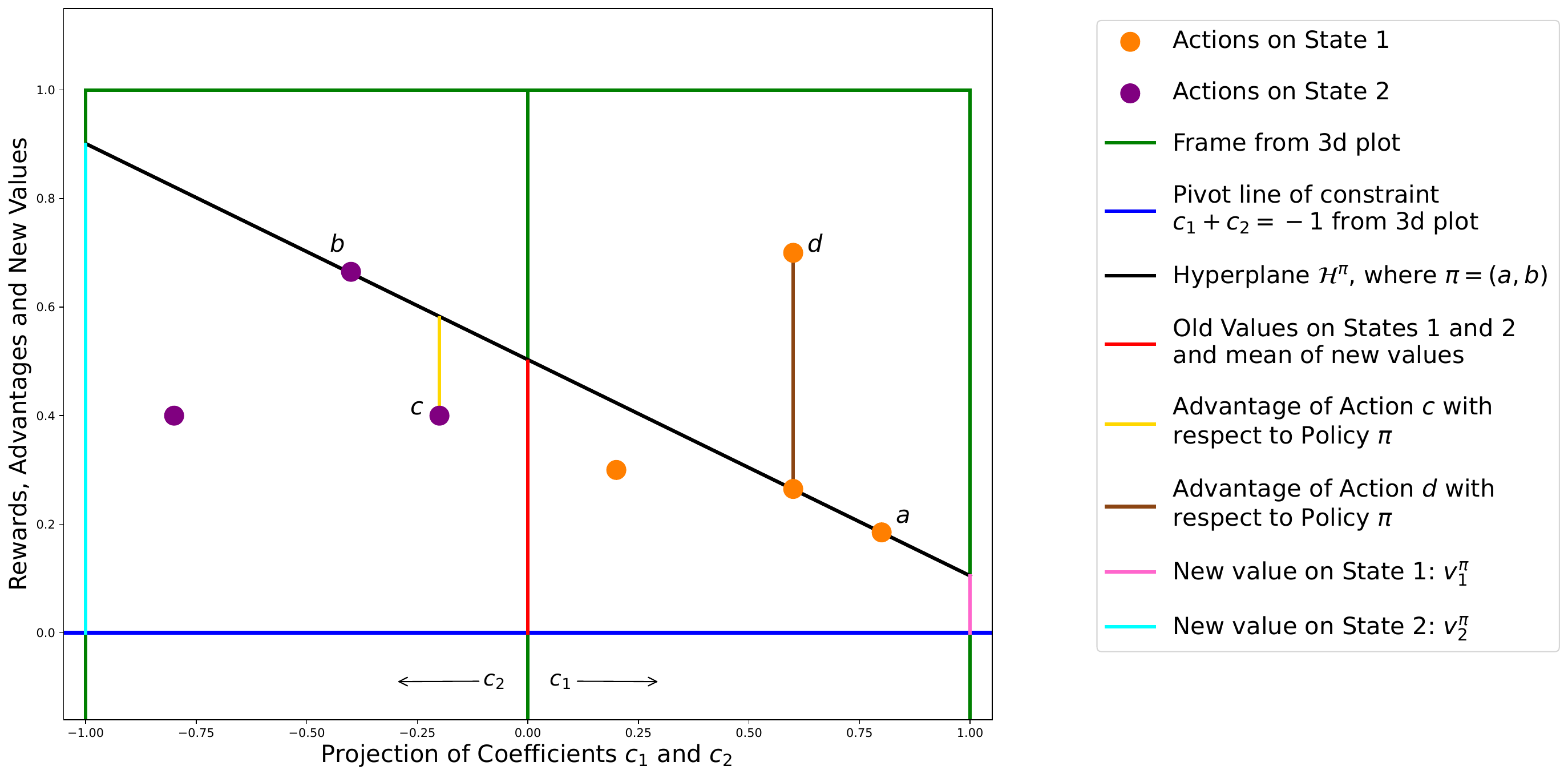}
    \caption{Example of a 2-State MDP in the new interpretation in the average reward case.}
    \label{fig:new_geom}
  \end{subfigure}
  \caption{Old (left) and new (right) visualizations of a two-state MDP with the same transition probabilities and rewards but different discount factors. The right panel also illustrates why the geometric interpretation developed for the discounted-reward case does not extend directly to the average-reward case: in the latter, the vertical value lines collapse into a single line, so all states have the same value, and this set of values does not define a unique hyperplane. At the same time, a hyperplane can still be constructed for an ergodic policy, and the geometric picture remains logically coherent in that case. To extend the geometric framework to the average-reward case, we propose measuring the values not on the inner but on the outer edges of the action zones. These new values $v$ can be used in both the average- and discounted-reward cases.}
\label{fig:old_and_new}
\end{figure}
In the discounted-reward setting, we can rearrange the Bellman equation to obtain the linear system $(I-\gamma P^\pi) V = R^\pi$, which possesses a unique fixed point $V^\pi$ as $ I-\gamma P^\pi $ is invertible. 
However, this formula breaks down in the average-reward case $\gamma=1$, where the matrix $ I- P^\pi$ is singular, and the discounted value function is no longer well-defined. Instead, the analysis relies on the gain function $\rho^\pi$ and the relative value function $h^\pi$. This can introduce two difficulties, \emph{(i)} the discounted case and average-reward case require different value functions, and \emph{(ii)} the relative value function is not unique, since $h^\pi + c\mathbbm{1}$ is also a solution for any constant $c$. 

This problem can also be illustrated in the geometric interpretation in Figure~\ref{fig:old_and_new}. There, the value function $V^\pi(s)$ is represented by the inner vertical line on the state $s$.
When $\gamma=1$, the vertical lines corresponding to different states coincide, reflecting the degeneracy of the classical value representation in the average-reward case.
Nevertheless, this observation suggests a potential way to develop a new interpretation. 
In particular, the advantage function remains invariant under a scaling of $\gamma$ for a fixed policy, which motivates us to develop a new value representation.

\paragraph{New Action and Policy Vectors.} 
As suggested by Figure~\ref{fig:new_geom}, we propose measuring values along the outer vertical lines, rather than along the inner vertical lines as previously, which leads to the following definitions of the action and policy vectors.

For SAP $a$ with $\mathrm{st}(a)\!=\!s$, let the \textbf{action vector} be
$
a^+ \!=\! \Bigl(r^a, \tfrac{\gamma p^a_1 - \gamma}{C}, \dots, \tfrac{\gamma(p^a_s - 1) - \gamma}{C}, \dots, \tfrac{\gamma p^a_n - \gamma}{C}\Bigr),
$
where $C = n\gamma + (1-\gamma)$ is an MDP constant. For convenience, denote by \(c_i^a\) the \(i\)-th coordinate of \(a^+\) excluding the first element \(r^a\), that is, 
\[
c_i^a =
\begin{cases}
\tfrac{\gamma p^a_i - \gamma}{C}, & \text{if } i \ne s, \\[4pt]
\tfrac{\gamma (p^a_s - 1) - \gamma}{C}, & \text{if } i = s.
\end{cases}
\]
Note that the sum of all entries of $a^+$ except the first one equals $-1$. The first entry is special; following \citet{mdp_geometry}, we refer to it as the \(0\)-th coordinate (or \emph{height}).

Given a unichain policy $\pi$, define the state values $v^\pi(s)$ by
\begin{equation} 
    \frac{v^\pi(s)}{C} \;=\; r^{\pi(s)} \;-\; \frac{\gamma\, v^\pi_\Sigma}{C} \;+\; \gamma \sum_{i=1}^n p_i^{\pi(s)} \frac{v^\pi(i)}{C},
    \quad\text{where } v^\pi_\Sigma = \sum_{i=1}^n v^\pi(i). \label{equ:new v definition}
\end{equation}
Then define the \textbf{policy vector} \(v^\pi_+\) as
$
v^\pi_+ \;=\; \bigl(1,\; v^\pi(1),\; \dots,\; v^\pi(n)\bigr).$

Below, we first discuss how this unified value representation leads to a consistent policy evaluation and a natural normalization of the MDP, such that we can verify the existence of the value and reflect the geometric inspiration in an algebraic way.
We then relate these newly introduced quantities to the classical value definitions in both the discounted- and average-reward settings. 

\paragraph{Policy Evaluation and MDP Normalization.} In the average-reward case, the matrix $I - \gamma P$ is not invertible when $\gamma=1$, so a unique value vector $V^\pi$ satisfying $R^\pi = (I - \gamma P^\pi) V^\pi$ does not exist. However, rewriting Equation \eqref{equ:new v definition} in matrix form and rearranging terms yields 
$
R^\pi = (I + \gamma E - \gamma P^\pi)\,\frac{v^\pi}{C},
$
where $E = \mathbbm{1} \mathbbm{1}^\top$ is the all-ones matrix. Whenever the matrix $I + \gamma E - \gamma P^\pi$ is invertible, the solution admits the explicit representation $v^\pi= C (I+\gamma E-\gamma P^\pi)^{-1}R^\pi.$
We can now give the algebraic definition of these new values:
\begin{definition}[New value function]
For a given policy $\pi$, the proposed value vector $v^\pi \in \mathbb{R}^n$ is defined as the unique solution to the linear system
\begin{align} \label{equ: vector form equ}
    v^\pi= C (I+\gamma E-\gamma P^\pi)^{-1}R^\pi,
\end{align}
where $C = n\gamma + 1 - \gamma$ for $\gamma\in(0,1]$.
\end{definition}

This equation can be solved in the average-reward case (i.e., $\gamma=1$) when the policy $\pi$ induces a unichain Markov chain, as shown in the following lemma. The proof is provided in Appendix~\ref{APP:proof for section 3}.

\begin{lemma} \label{lem:uniqueness for new v}
Let $\pi$ be a policy with transition kernel $P^\pi$ on a finite state space, and let $\mathbbm{1}$ denote the all-ones vector with $E := \mathbbm{1} \mathbbm{1}^\top$. The matrix $I + E - P^\pi$ is invertible if, and only if, the Markov chain induced by $P^\pi$ is unichain.
In particular, in the unichain case, the values $v^\pi$ defined via the linear system $(I + E - P^\pi)\,v^\pi = R^\pi$ are uniquely determined.
\end{lemma}

Since we can compute values for any unichain policy $\pi$, we can compute
the advantage of any SAP with respect to such a policy, which is
equivalent to constructing the policy hyperplane $\mathcal{H}^\pi$. This,
in turn, implies that the transformation $\mathcal{L}^\Delta$ described
by \citet{mdp_geometry} can be applied to the MDP. Therefore, if the
optimal policy induces a unichain MDP (the case we analyze in this
work), the MDP can be normalized, which significantly simplifies the
analysis.

In particular, instead of analyzing the original MDP
$\mathcal{M}_{\rm orig}$, we analyze its normalization $\mathcal{M}$.
In this normalized MDP, the values of the optimal policy are $0$, and
the values of other policies and pseudo-policies $v^\pi$ correspond to
the errors in the original MDP: $v^\pi = v^\pi_{\rm orig} - v^*_{\rm orig}.$
Moreover, the rewards in $\mathcal{M}$ are equal to the advantages with
respect to the optimal policy. Consequently, rewards of optimal actions
are $0$, while rewards of non-optimal actions are negative.

\paragraph{Advantage Representation under the New Values.} 
The geometric interpretation motivates the design of the new value function so that the policy hyperplane remains unchanged and the advantage function is preserved. The following lemma states that under the proposed value representation, the inner product of the action vector and the policy vector recovers the classical advantage function.

\begin{lemma} \label{lem:union equivalence of adv fct}
For $\gamma\in(0,1]$, the inner product of action vector $a$ and policy vector $\pi$ is equal to the advantage of $a$ with respect to $\pi$, i.e., $a^+v^\pi_+ = r^a + \sum_i c^a_iv^\pi(i) = \adv{a}{\pi}.$
\end{lemma}
The result establishes equivalence between the old and new geometric interpretations in both the discounted and average-reward cases.
Consequently, the geometric interpretation of policy hyperplanes remains valid and can be used directly in VI.
Lemma~\ref{lem:union equivalence of adv fct} is a union result of Lemma~\ref{lem:discounted case equivalence} for the discounted case and Lemma~\ref{lem:average reward case geometric equivalence with adv} for the average-reward case. 
The detailed derivations are provided in Appendix~\ref{APP: relationship details}, where we establish the connections between the new value function and the classical value function, the average reward, and the relative value function.

\section{Main Result} \label{sec:main result}

Thus far, we have established the structural equivalence of the new value function in both the discounted and average-reward cases via the geometric interpretation. In the following, we investigate the algorithmic consequences. In particular, we analyze the convergence behavior of a Value Iteration scheme based on the advantage function associated with the new value function $v^\pi$ and show geometric convergence in both the discounted and average-reward cases.

For clarity of exposition, we consider the normalized MDP in the following and detailed proofs of the results in this section, and supporting lemmas are deferred to Appendix~\ref{APP:main results}.

We work under the following assumption, whose feasibility is discussed later in this section.
\begin{assumption} \label{ass:irred_and_aper_MDP}
There is a \textbf{unique} optimal policy $\pi^*$. The MDP implied by $\pi^*$ is \textbf{unichain}.
\end{assumption}

Now, we present the \textbf{Value Iteration algorithm} on the new values $v^\pi$. Starting from the arbitrary values $v_0$, the algorithm consists of iteratively performing an update similar to Equation~\eqref{eq:vi_update}:

\begin{equation*}
\frac{v_{t+1}}{C} = \frac{v_{t}}{C} + \argmax_{a} \adv{a}{v_t} = R^{\pi_t} - E\frac{v_{t\Sigma}}{C}  + \gamma P^{\pi_t}\frac{v^t}{C},
\end{equation*}

where $\pi_t$ is a policy greedy with respect to the values $v_t$, $\pi_t(i) = \argmax_{a, \st(a)=i} \adv{a}{v_t}$. The resulting algorithm, although it chooses a different anchor, shares
some similarities with the Relative Value Iteration algorithm
\citep{white1963}, which inspired our choice of $v$ as the notation for
the new values.

We next show that the new Bellman operator has a contraction property with respect to $\spanv{v_t}$, the span seminorm of the value vector. 

\begin{theorem} \label{thm:sync_no_lr}
If Assumption \ref{ass:irred_and_aper_MDP} holds, the span of the normalized value vector obtained after $T=n^2$ steps of a standard Value Iteration algorithm satisfies the following inequality:
\begin{equation*}
 \spanv{v_T} \le \gamma^T \iota\, \spanv{v_0},   
\end{equation*} 

where $\gamma\in(0,1],\;\iota \in (0,1)$.
\end{theorem}

The above contraction property holds for $\gamma\in(0,1]$, implying the asymptotic convergence of the value iteration scheme in the span seminorm. When $\gamma\in(0,1)$, which corresponds to the discounted reward case, geometric convergence follows directly from the factor $\gamma^T$ and is enhanced by the factor $\iota$. When $\gamma=1$ (average-reward case), contraction is induced through the constant $\iota\in(0,1)$, which arises from the properties of the transition kernels $P_t$ generated by the greedy policies and the optimal kernel $P^*$ over the finite horizon $T=n^2$. Notice that $\iota$ occurs within finite steps $n^2$; hence, for infinite time, asymptotic convergence can be obtained over successive windows of $n^2$ iterations.

As a consequence of Theorem~\ref{thm:sync_no_lr}, we obtain explicit iteration complexity bounds in both cases, which are presented in the following corollaries.

\begin{corollary} \label{cor:iteration_complexity_disc}
    In the discounted reward case, the Value Iteration Algorithm outputs an $\epsilon$-optimal policy after 
    \begin{align*}
        \mathcal{O}\left( \frac{\log(1/\epsilon) + \log(1/(1-\gamma))}{\log(1/\gamma) + \frac{\log(1/\iota)}{n^2} } \right)
    \end{align*}
    iterations.
\end{corollary}

\begin{corollary} \label{cor:iteration_complexity_av}
    In the average-reward case, the Value Iteration Algorithm outputs an $\epsilon$-optimal policy after 
    \begin{align*}
        \mathcal{O}\left( \frac{\log(1/\epsilon)}{\frac{\log(1/\iota)}{n^2} } \right)
    \end{align*}
   iterations.
\end{corollary}

\section{Discussion and Conclusion}

\paragraph{Discussion. Comparison with \citet{lee2025optimal}.} Let us explicitly compare our results to the state-of-the-art analysis presented by \citet{lee2025optimal}, in particular, their Theorem~3.

\begin{theorem}[Theorem 3 from \cite{lee2025optimal}]
Let $t\ge 0$, $n \ge t+2$, and $V_0 \in \mathbb{R}^n$. Then there exists a unichain MDP with $|\mathcal{S}|=n$ and $|\mathcal{A}|=1$ such that its modified Bellman equations has a solution $(g^*, h^*)$ satisfying
\begin{equation*}
\left\lVert \sum_{i=0}^t \beta_i (TV_i - V_i) - g^* \right\rVert_\infty \ge
\frac{1}{t+1} \left\lVert V_0 - h^* \right\rVert
\end{equation*}
for any iterates $\{ V_i \}_{i=0}^t$ satisfying the span condition
\begin{equation*}
V_{t+1} \in V_o + \text{span}\{TV_0 - V_0, \dots, TV_k - V_k\}
\end{equation*}
and any choice of real numbers $\{ \beta_i \}_{i=0}^k$ such that $\sum_{i=0}^k \beta_i = 1$.
\end{theorem}
Then, setting $\beta_k$ to 1 gives the lower bound on the convergence of the VI algorithm.

The theorem is proved by providing an example for which the convergence of 
VI cannot exceed a sublinear rate. The example constructed in 
the proof is a unichain MDP that satisfies Assumption 
\ref{ass:irred_and_aper_MDP}. At first glance, this seems to contradict our 
main result. Does this mean that one of the papers contains a technical 
error?

The answer is No. The apparent contradiction arises from two features of the 
analysis by \citet{lee2025optimal}: the use of the $\ell^\infty$ norm of the 
Bellman error as the performance measure and the consideration of a small 
number of iterations relative to the number of states. Regarding the first 
feature, \citet{lee2025optimal} also discuss convergence in 
terms of the span seminorm at the end of Section~2, where they observe that 
convergence in $\ell^\infty$ implies convergence in the span seminorm 
because the span seminorm is upper bounded by twice the $\ell^\infty$ norm. 
This bound shows that the span seminorm cannot converge \emph{slower} than 
the $\ell^\infty$ norm, but it does not rule out the possibility that it 
converges \emph{significantly faster}. As we show in this paper, this is 
indeed the case under Assumption \ref{ass:irred_and_aper_MDP}. Note that, although the infinity norm is commonly used in classical analyses, the span 
seminorm is not weaker in this context, since, as shown in Corollaries 
\ref{cor:iteration_complexity_disc} and \ref{cor:iteration_complexity_av}, 
it suffices for evaluating the quality of the resulting policy.

The second feature concerns the number of algorithm iterations. 
\citet{lee2025optimal} establish the bound for $t \le n-2$ iterations. 
This choice ensures that the sublinear rate follows from an 
information-based complexity argument. Intuitively, there exist two states 
in the graph such that the shortest path between them requires exactly 
$n-1$ steps. If the number of iterations is at most $n-2$, the first state 
cannot receive any information from the last state, and therefore the value 
error may remain arbitrarily large. As we show in this paper, $n^2$ 
iterations are sufficient for any two states to communicate, after which the 
geometric convergence rate becomes visible in finite time.

Taken together, these two features enable an analysis that is technically 
correct, but may lead to the misleading conclusion that VI exhibits only 
sublinear finite-time convergence in the unichain case. In contrast, our 
results show that when the optimal policy is unichain (which always holds in 
the unichain MDP setting), the convergence rate is in fact geometric.

\paragraph{Discussion on the multichain case.}\label{par:assumptio_discussion}
We now discuss the limitations of Assumption \ref{ass:irred_and_aper_MDP}. 
At first glance, this assumption may appear restrictive; however, in the 
average-reward setting, it is often satisfied. The prerequisite required to 
guarantee uniqueness is a strict preference over states and actions, after 
which the unichain optimal policy follows from the connectivity assumption 
commonly used in exploration analysis \citep{auer2008near, 
boone2025logarithmic}. This assumption states that for any two states $s$ 
and $s'$ there exists a policy $\pi$ such that $s'$ is reachable from $s$, 
that is,
\[
\max_{s\ne s'} \min_\pi \mathbb{E}[s_t = s' \mid s_0 = s] \le \infty .
\]
The connectivity assumption is naturally satisfied for any unichain MDP as well as for multi-chain MDPs with transient states between classes.
Thus, the assumption fails only when there exists at least one isolated 
class (i.e., a set of states that cannot be left under any policy) together 
with another class of connected states that achieves a higher gain under any 
policy. In this setting, the same line of analysis cannot be applied, since it
requires sufficiently accurate value estimates in order to provide guarantees on the
resulting policy. A general analysis of this case is left for future work.

\paragraph{Conclusion.} We analyzed the convergence rates of VI in the discounted and average-reward settings.
Our analysis revealed that, under a unique, unichain optimal policy, VI converges geometrically with a rate strictly faster than $\gamma$ in the span seminorm.
This result significantly improves upon earlier analysis, which suggested that sublinear convergence is optimal in the average-reward setting.
The analysis was based on a geometric interpretation of MDPs, which enabled us to analyze both settings jointly, in contrast to most prior work that analyzes them separately.

\newpage

\appendix







\bibliography{main}

@article{mdp_geometry,
  title={MDP Geometry, Normalization and Reward Balancing Solvers},
  author={Mustafin, Arsenii and Pakharev, Aleksei and Olshevsky, Alex and Paschalidis, Ioannis Ch},
  journal={Proceedings of The 28th International Conference on Artificial Intelligence and Statistics, PMLR 258:2476-2484},
  year={2025}
}

@book{howard1960dynamic,
  title={Dynamic programming and markov processes.},
  author={Howard, Ronald A},
  year={1960},
  publisher={John Wiley}
}

@book{puterman2014markov,
  title={Markov decision processes: discrete stochastic dynamic programming},
  author={Puterman, Martin L},
  year={2014},
  publisher={John Wiley \& Sons}
}

@book{bellmandp,
title={Dynamic Programming.},
author={Bellman, R.},
year={1957},
publisher={Dover Publications}}

@book{Puterman1994,
  author    = {Puterman, Martin L.},
  title     = {Markov Decision Processes: Discrete Stochastic Dynamic Programming},
  year      = {1994},
  publisher = {John Wiley \& Sons},
  address   = {New York, NY},
  isbn      = {978-0471619772},
  doi       = {10.1002/9780470316887}
}

@article{feinberg2014value,
  title={The value iteration algorithm is not strongly polynomial for discounted dynamic programming},
  author={Feinberg, Eugene A and Huang, Jefferson},
  journal={Operations Research Letters},
  volume={42},
  number={2},
  pages={130--131},
  year={2014},
  publisher={Elsevier}
}

@article{yushkevich1974class,
  title={On a class of strategies in general Markov decision models},
  author={Yushkevich, AA},
  journal={Theory of Probability \& Its Applications},
  volume={18},
  number={4},
  pages={777--779},
  year={1974},
  publisher={SIAM}
}

@article{blackwell1962,
  author  = {David Blackwell},
  title   = {Discrete Dynamic Programming},
  journal = {The Annals of Mathematical Statistics},
  volume  = {33},
  pages   = {719--726},
  year    = {1962}
}

@article{denardo1968multichain,
  title={Multichain Markov renewal programs},
  author={Denardo, Eric V and Fox, Bennett L},
  journal={SIAM Journal on Applied Mathematics},
  volume={16},
  number={3},
  pages={468--487},
  year={1968},
  publisher={SIAM}
}

@article{schweitzer1978functional,
  title={The functional equations of undiscounted Markov renewal programming},
  author={Schweitzer, Paul J and Federgruen, Awi},
  journal={Mathematics of Operations Research},
  volume={3},
  number={4},
  pages={308--321},
  year={1978},
  publisher={INFORMS}
}

@article{schweitzer1984existence,
  title={On the existence of relative values for undiscounted Markovian decision processes with a scalar gain rate},
  author={Schweitzer, Paul J},
  journal={Journal of mathematical analysis and applications},
  volume={104},
  number={1},
  pages={67--78},
  year={1984},
  publisher={Elsevier}
}

@book{seneta2006nonnegative,
  author    = {E. Seneta},
  title     = {Non-negative Matrices and Markov Chains},
  edition   = {2nd},
  series    = {Springer Series in Statistics},
  publisher = {Springer},
  address   = {New York},
  year      = {2006}
}

@incollection{huebner1977,
  author    = {Gerhard H{\"u}bner and Karl Hinderer},
  title     = {On Exact and Approximate Solutions of Unstructured Finite-Stage Dynamic Programs},
  booktitle = {Markov Decision Theory},
  editor    = {H. C. Tijms and J. Wessels},
  series    = {Mathematical Centre Tracts},
  volume    = {93},
  pages     = {57--76},
  publisher = {Mathematical Centre},
  address   = {Amsterdam},
  year      = {1977}
}

@book{puterman2014,
  author    = {Martin L. Puterman},
  title     = {Markov Decision Processes: Discrete Stochastic Dynamic Programming},
  publisher = {Wiley},
  address   = {Hoboken, NJ},
  year      = {2014},
  note      = {Reprint of the 1994 edition}
}

@article{federgruen1978,
  author    = {Awi Federgruen and Paul J. Schweitzer},
  title     = {The Functional Equations of Undiscounted Markov Renewal Programming},
  journal   = {Mathematics of Operations Research},
  volume    = {3},
  number    = {3},
  pages     = {308--321},
  year      = {1978}
}

@book{vanderwal1981,
  author    = {J. van der Wal},
  title     = {Stochastic Dynamic Programming: Successive Approximations and Nearly Optimal Strategies for Markov Decision Processes and Markov Games},
  series    = {Mathematical Centre Tracts},
  number    = {139},
  publisher = {Mathematical Centre},
  address   = {Amsterdam},
  year      = {1981}
}

@article{schweitzer1977,
  author    = {Paul J. Schweitzer and Awi Federgruen},
  title     = {The Asymptotic Behavior of Undiscounted Value Iteration in Markov Decision Problems},
  journal   = {Mathematics of Operations Research},
  volume    = {2},
  number    = {4},
  pages     = {360--381},
  year      = {1977}
}

@article{schweitzer1979,
  author    = {Paul J. Schweitzer and Awi Federgruen},
  title     = {Geometric Convergence of Value-Iteration in Multichain Markov Decision Problems},
  journal   = {Advances in Applied Probability},
  volume    = {11},
  number    = {1},
  pages     = {188--217},
  year      = {1979}
}

@article{white1963,
  author    = {D. J. White},
  title     = {Dynamic Programming, Markov Chains, and the Method of Successive Approximations},
  journal   = {Journal of Mathematical Analysis and Applications},
  volume    = {6},
  number    = {3},
  pages     = {373--376},
  year      = {1963}
}

@article{morton1977,
  author    = {Thomas E. Morton and William E. Wecker},
  title     = {Discounting, Ergodicity and Convergence for Markov Decision Processes},
  journal   = {Management Science},
  volume    = {23},
  number    = {8},
  pages     = {890--900},
  year      = {1977}
}

@article{dellavecchia2012,
  author    = {Eugenio Della Vecchia and Silvia Di Marco and Alain Jean-Marie},
  title     = {Illustrated Review of Convergence Conditions of the Value Iteration Algorithm and the Rolling Horizon Procedure for Average-Cost MDPs},
  journal   = {Annals of Operations Research},
  volume    = {199},
  number    = {1},
  pages     = {193--214},
  year      = {2012},
  doi       = {10.1007/s10479-012-1070-0}
}

@article{mahadevan1996average,
  title={Average reward reinforcement learning: Foundations, algorithms, and empirical results},
  author={Mahadevan, Sridhar},
  journal={Machine learning},
  volume={22},
  number={1},
  pages={159--195},
  year={1996},
  publisher={Springer}
}

@article{zhang2020average,
  title={Average reward reinforcement learning with monotonic policy improvement},
  author={Zhang, Yiming and Ross, Keith W},
  year={2020}
}

@inproceedings{adamczyk2025average,
  title={Average-Reward Soft Actor-Critic},
  author={Adamczyk, Jacob and Makarenko, Volodymyr and Tiomkin, Stas and Kulkarni, Rahul V},
  booktitle={Reinforcement Learning Conference},
  year={2025}
}

@misc{RLTheoryLec3,
  author       = {Csaba Szepesv{\'a}ri},
  title        = {Value Iteration and Our First Lower Bound},
  howpublished = {RL Theory: Planning in MDPs, Lecture 3},
  year         = {2021},
  url          = {https://rltheory.github.io/lecture-notes/planning-in-mdps/lec3/},
  note         = {Course notes for CMPUT 653: Theoretical Foundations of Reinforcement Learning}
}

@inproceedings{lee2025optimal,
  title={OPTIMAL NON-ASYMPTOTIC RATES OF VALUE ITERATION FOR AVERAGE-REWARD MDPS},
  author={Lee, Jongmin and Ryu, Ernest K},
  booktitle={13th International Conference on Learning Representations, ICLR 2025},
  pages={93671--93713},
  year={2025},
  organization={International Conference on Learning Representations, ICLR}
}

@article{auer2008near,
  title={Near-optimal regret bounds for reinforcement learning},
  author={Auer, Peter and Jaksch, Thomas and Ortner, Ronald},
  journal={Advances in neural information processing systems},
  volume={21},
  year={2008}
}

@article{boone2025logarithmic,
  title={Logarithmic regret of exploration in average reward Markov decision processes},
  author={Boone, Victor and Gaujal, Bruno},
  journal={arXiv preprint arXiv:2502.06480},
  year={2025}
}

@article{tsitsiklis2002average,
  title={On average versus discounted reward temporal-difference learning},
  author={Tsitsiklis, John N and Van Roy, Benjamin},
  journal={Machine Learning},
  volume={49},
  number={2},
  pages={179--191},
  year={2002},
  publisher={Springer}
}

@inproceedings{mahadevan1996sensitive,
  title={Sensitive discount optimality: Unifying discounted and average reward reinforcement learning},
  author={Mahadevan, Sridhar},
  booktitle={ICML},
  pages={328--336},
  year={1996}
}

@article{grand2023reducing,
  title={Reducing blackwell and average optimality to discounted mdps via the blackwell discount factor},
  author={Grand-Cl{\'e}ment, Julien and Petrik, Marek},
  journal={Advances in Neural Information Processing Systems},
  volume={36},
  pages={52628--52647},
  year={2023}
}

@article{wielandt1950,
  author  = {Wielandt, Helmut},
  title   = {Unzerlegbare, nicht negative Matrizen},
  journal = {Mathematische Zeitschrift},
  volume  = {52},
  pages   = {642--648},
  year    = {1950},
  language= {German}
}
\bibliographystyle{rlj}

\beginSupplementaryMaterials

\section{Details of the MDP example.} \label{app:mdp_example}

\paragraph{MDP construction (heaven–purgatory–hell).}
We consider a finite MDP $M=(\mathcal S,\mathcal A,P,r)$. The state space
\[
\mathcal S=\{H_1,H_2,H_3,\ P_1,P_2,P_3,\ L_1,L_2,L_3\}
\]
is partitioned into three groups: ``heaven'' $\mathcal S_H=\{H_1,H_2,H_3\}$, ``purgatory'' $\mathcal S_P=\{P_1,P_2,P_3\}$, and ``hell'' $\mathcal S_L=\{L_1,L_2,L_3\}$. We analyze the dynamics under a fixed (optimal) stationary policy $\pi^\star$ that selects a single available action in each state (the construction below specifies the induced transition kernel $P^{\pi^\star}$ and rewards).

\medskip
\noindent\textbf{Rewards.} The per–step reward function is
\[
r(s)=\begin{cases}
1,& s\in\mathcal S_H,\\
0,& s\in\mathcal S_P,\\
-1,& s\in\mathcal S_L.
\end{cases}
\]

\medskip
\noindent\textbf{Transitions under $\pi^\star$.} The induced row–stochastic matrix $P\equiv P^{\pi^\star}$ is specified by the following nonzero probabilities.

\emph{Heaven cycle with rare leakage to purgatory:}
\[
\begin{aligned}
&\Pr(H_2\mid H_1)=0.75,\qquad \Pr(P_1\mid H_1)=0.25,\\
&\Pr(H_3\mid H_2)=0.75,\qquad \Pr(P_1\mid H_2)=0.25,\\
&\Pr(H_1\mid H_3)=0.75,\qquad \Pr(P_1\mid H_3)=0.25.
\end{aligned}
\]

\emph{Purgatory chain that typically returns to heaven, with leakage to hell:}
\[
\begin{aligned}
&\Pr(P_1\mid P_1)=0.20,\quad \Pr(P_2\mid P_1)=0.60,\quad \Pr(L_1\mid P_1)=0.20,\\
&\Pr(P_2\mid P_2)=0.20,\quad \Pr(P_3\mid P_2)=0.60,\quad \Pr(L_1\mid P_2)=0.20,\\
&\Pr(P_3\mid P_3)=0.20,\quad \Pr(H_1\mid P_3)=0.60,\quad \Pr(L_1\mid P_3)=0.20.
\end{aligned}
\]

\emph{Hell cycle that occasionally escapes to purgatory:}
\[
\begin{aligned}
&\Pr(P_2\mid L_1)=0.50,\qquad \Pr(L_2\mid L_1)=0.50,\\
&\Pr(P_2\mid L_2)=0.50,\qquad \Pr(L_3\mid L_2)=0.50,\\
&\Pr(P_2\mid L_3)=0.50,\qquad \Pr(L_1\mid L_3)=0.50.
\end{aligned}
\]

\medskip
\noindent\textbf{Summary.} Under $\pi^\star$, the heaven states form a $3$-cycle with reward $1$ and a $0.25$ chance per step to fall into purgatory; purgatory forms a directed chain that either advances, self–loops with probability $0.20$, leaks to hell with probability $0.20$, or returns to heaven from $P_3$ with probability $0.60$; hell is a $3$–cycle with reward $-1$ and a $0.50$ chance per step to move to $P_2$. This fully specifies $(\mathcal S,P,r)$ for the single–action (policy–fixed) analysis of this example.

\section{Relationship between the New Value and Classical Value}\label{APP: relationship details}
In this appendix, we provide the technical details underlying the connections between the proposed value function and the classical formulations in both the discounted and the average-reward settings. We first present the main relationships and then provide the proofs of the lemmas.

\paragraph{Discounted Reward Case}
From a geometric perspective, the difference between the new values and the classical values is the different locations of the vertical value lines on the states, as shown in Figure~\ref{fig:first} and Figure~\ref{fig:new_geom}. To illustrate the algebraic interpretation of the new value function, we show the relationships between the old and new values in the discounted reward case. 

When $\gamma <1$ for the discounted reward case, we first construct the connection with the old value function.

\begin{lemma} \label{lem:discounted value relationship}
    If $\gamma<1$, the for a given $\pi$, there exists
    \begin{align} \label{equ:discounted case relationship between old and new v}
        V^\pi(s) = \frac{v^\pi(s)}{C} + \frac{\gamma v^\pi_{\sum} }{C(1-\gamma)}.
    \end{align}
    Consequently, the sums of the value function exists as  
    \begin{align} \label{equ:discounted case sum relationship between old and new v}
        v^\pi_\Sigma = (1-\gamma)V^\pi_\Sigma,
    \end{align}
    where $ V^\pi_\Sigma = \sum_s V^\pi(s)  $, and for the mean value $ \bar V^\pi := \frac{V^\pi_\Sigma}{n}$ and $ \bar v^\pi := \frac{v^\pi_\Sigma}{n} $, there exists
    \begin{align*}
        V^\pi(s) - \bar{V}^\pi = \frac{1}{C} ( v^\pi(s) - \bar{v}^\pi ).
    \end{align*}
\end{lemma}
This Lemma establishes the connection between the new value and the classical value in both component-wise and sum-wise forms. From the sum perspective, the factor $1-\gamma$ is introduced into the equivalence, which coincides with the geometric interpretation that, in the classical case, there exists a scaling weight $1-\gamma$ that reflects the value function on the vertical axis, whereas in the new case this scaling weight is removed. 

With this, we can finally establish the equivalence between the old and new geometric interpretations in the discounted case.

\begin{lemma} \label{lem:discounted case equivalence}
    
In the discounted case, the inner product of action vector $a$ and policy vector $\pi$ is equal to the advantage of $a$ with respect to $\pi$:

\begin{align*}
a^+v^\pi_+ &= r^a + \sum_i c^a_iv^\pi(i) = \adv{a}{\pi}.
\end{align*}
\end{lemma}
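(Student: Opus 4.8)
The plan is to prove the two asserted equalities in turn. The first, $a^+v^\pi_+ = r^a + \sum_i c^a_i v^\pi(i)$, is nothing more than unwinding the definition of the inner product together with the shapes of the two vectors: the $0$-th (height) coordinate of $v^\pi_+$ equals $1$, so it multiplies the height $r^a$ of $a^+$, while the remaining $n$ coordinates contribute $\sum_i c^a_i v^\pi(i)$. No real work is involved here, so the entire content of the lemma lies in the second equality, $r^a + \sum_i c^a_i v^\pi(i) = \adv{a}{\pi}$.

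For that equality, the one nontrivial ingredient is the relation between the new values $v^\pi$ and the classical values $V^\pi$ established immediately before the lemma: $V^\pi(s)$ equals $v^\pi(s)/C$ plus a state-independent constant proportional to $v^\pi_\Sigma$. I would substitute this relation into the advantage $\adv{a}{\pi}$ from \eqref{eq:def_advantage}, turning it into an expression in the rewards, the transition probabilities $p^a_i$, and the new values $v^\pi(i)$. Using that the transition probabilities sum to one, $\sum_i p^a_i = 1$, the state-independent constant enters through both the next-state average and the current-state term and only partially cancels, leaving a clean multiple of $v^\pi_\Sigma$ alongside the scaled terms $\frac{1}{C}\sum_i p^a_i v^\pi(i)$ and $\frac{1}{C}v^\pi(s)$.

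In parallel I would expand $\sum_i c^a_i v^\pi(i)$ directly from the definition of the coordinates, isolating the distinguished state-$s$ coordinate from the generic ones and collecting the uniform part of each coordinate into a single multiple of $v^\pi_\Sigma = \sum_i v^\pi(i)$. Comparing the two expressions then reduces to matching three groups of terms: the $\sum_i p^a_i v^\pi(i)$ terms, the $v^\pi_\Sigma$ terms, and the isolated $v^\pi(s)$ term. The structural facts needed are simply $\sum_i p^a_i = 1$ and the explicit coordinate definitions, including the extra contribution built into the $s$-th coordinate of $a^+$.

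The step I expect to be the main obstacle is the bookkeeping of this additive constant. In the original discounted geometry the advantage is read off directly as an inner product with the value vector, whereas here $v^\pi$ differs from $V^\pi$ not only by the scaling $1/C$ but also by a state-independent shift proportional to $v^\pi_\Sigma$; because of the discount factor, this shift does not cancel cleanly between the next-state and current-state contributions of the advantage. The crux is therefore to verify that the leftover multiple of $v^\pi_\Sigma$ is exactly the one produced by the uniform part of the action-vector coordinates, and that the special state-$s$ coordinate reproduces the $-V^\pi(s)$ term; once these two matches are confirmed, the remaining terms coincide and the equality follows.
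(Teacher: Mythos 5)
Your plan is correct and, in substance, it is the same proof as the paper's: both arguments pivot on the affine relation $V^\pi(s) = \frac{v^\pi(s)}{C} + \frac{\gamma v^\pi_\Sigma}{C(1-\gamma)}$ derived just before the lemma, and both reduce the new inner product to the classical advantage \eqref{eq:def_advantage}. The only organizational difference is that the paper centers everything by means --- it adds and subtracts $\bar v^\pi$, uses $\sum_i c^a_i = -1$, $\bar v^\pi = (1-\gamma)\bar V^\pi$, and $v^\pi(i)-\bar v^\pi = C\bigl(V^\pi(i)-\bar V^\pi\bigr)$, and lands on the old-framework expression $-\sum_i \tilde{c}^a_i V^\pi(i)$, implicitly invoking the prior geometric identity $r^a+\sum_i \tilde{c}^a_i V^\pi(i)=\adv{a}{\pi}$ --- whereas you substitute the un-centered relation directly into \eqref{eq:def_advantage} and match coefficients. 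Your version is slightly more self-contained, since it never references the old coordinates $\tilde{c}^a_i$, which this paper does not define.

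One concrete warning about the step you yourself single out as the crux: the bookkeeping closes only if the $s$-th coordinate of $a^+$ is read as $c^a_s = \frac{\gamma p^a_s - 1 - \gamma}{C}$, the reading forced by the paper's remark that the non-height entries sum to $-1$ (and used tacitly in the paper's own proof, whose third line produces the undiscounted term $V^\pi(s)-\bar V^\pi$ rather than $\gamma\bigl(V^\pi(s)-\bar V^\pi\bigr)$). With the formula as literally printed, $c^a_s=\frac{\gamma(p^a_s-1)-\gamma}{C}$, your expansion of the inner product gives $r^a + \frac{\gamma}{C}\sum_i p^a_i v^\pi(i) - \frac{\gamma}{C}v^\pi_\Sigma - \frac{\gamma}{C}v^\pi(s)$, while substituting the affine relation into \eqref{eq:def_advantage} gives $r^a + \frac{\gamma}{C}\sum_i p^a_i v^\pi(i) - \frac{\gamma}{C}v^\pi_\Sigma - \frac{1}{C}v^\pi(s)$; here the state-independent shift does collapse to $-\frac{\gamma}{C}v^\pi_\Sigma$ exactly as you predict, but the two sides then disagree by $\frac{1-\gamma}{C}\,v^\pi(s)$ whenever $\gamma<1$. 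So a careful execution of your plan will surface this typo in the coordinate definition; with the corrected $s$-th coordinate, your three groups of terms match exactly and the proof is complete.
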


Thus, in the discounted case, the classic value iteration algorithm can still be implemented as the new value function does not change the advantage function. Furthermore, in our newly defined value function, $\gamma$ can be $1$, which motivates investigating equivalence in the average-reward case. 

\paragraph{Average Reward Case.} The equivalence of the new value function and the old value function in the discounted case has been established. 
We recall the equivalence between the discounted value function and the relative value function from the literature in the following lemma.

\begin{lemma}[Corollary 8.2.4 \cite{Puterman1994}]\label{lemma:old equivalence}
    Given any policy $\pi$ and state $s$,
    \[
        V^\pi(s) \;=\; \frac{\rho^\pi(s)}{1-\gamma} \;+\; h^\pi(s) \;+\; f^\pi(s,\gamma),
    \]
    where $V^\pi$ and $h^\pi$ are the old discounted value function and the old relative value function, respectively; $\rho^\pi(s)$ is the average reward or gain, which is independent of $s$ under standard unichain assumptions; and $f^\pi$ is a remainder term such that $\lim_{\gamma\to 1} f^\pi(s,\gamma) = 0$.
\end{lemma}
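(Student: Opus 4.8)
The plan is to reduce the statement to the classical Laurent expansion of the resolvent $(I-\gamma P^\pi)^{-1}$ about the point $\gamma=1$. Since $\gamma<1$, the discounted Bellman equation $V^\pi = R^\pi + \gamma P^\pi V^\pi$ is equivalent to $V^\pi = (I-\gamma P^\pi)^{-1}R^\pi$, so the whole question becomes how this matrix inverse behaves as $\gamma\to1$. The claimed decomposition $V^\pi(s)=\frac{\rho^\pi(s)}{1-\gamma}+h^\pi(s)+f^\pi(s,\gamma)$ is exactly the assertion that the inverse has a simple pole at $\gamma=1$ whose residue produces the gain and whose finite part produces the bias.

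First I would introduce the two objects that control the limit: the Cesàro-limit (stationary) matrix $P^* = \lim_{N\to\infty}\frac1N\sum_{t=0}^{N-1}(P^\pi)^t$ and the deviation matrix $H=\sum_{t=0}^\infty\bigl((P^\pi)^t-P^*\bigr)=(I-P^\pi+P^*)^{-1}(I-P^*)$. I would then record the standard identities $P^\pi P^* = P^* P^\pi = (P^*)^2 = P^*$, $\;P^*H=HP^*=0$, and $(I-P^\pi)H=H(I-P^\pi)=I-P^*$. The key observation is that $P^\pi$ leaves both $\operatorname{range}(P^*)$ and $\operatorname{range}(I-P^*)$ invariant, so $I-\gamma P^\pi$ can be analyzed separately on each. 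On $\operatorname{range}(P^*)$ the operator reduces to multiplication by $1-\gamma$, contributing the polar term $\frac{1}{1-\gamma}P^*$; on $\operatorname{range}(I-P^*)$ the operator $I-\gamma P^\pi$ stays invertible as $\gamma\to1$ (because $I-P^\pi$ is invertible there, with inverse $H$), and a Neumann series in $H$ yields a part that is analytic in $1-\gamma$ and equals $H$ at $\gamma=1$. Putting the pieces together gives $(I-\gamma P^\pi)^{-1}=\frac{1}{1-\gamma}P^*+H+O(1-\gamma)$.

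It then remains to identify the coefficients after applying the inverse to $R^\pi$. I would check that $P^*R^\pi$ is precisely the gain vector (in the unichain case every row of $P^*$ equals the stationary distribution, so $P^*R^\pi=\rho^\pi\mathbbm{1}$), and that $HR^\pi=\sum_{t=0}^\infty\bigl((P^\pi)^t-P^*\bigr)R^\pi=\sum_{t=0}^\infty\bigl((P^\pi)^tR^\pi-\rho^\pi\mathbbm{1}\bigr)=h^\pi$, which matches the paper's definition of the bias term-by-term. Collecting the remaining $O(1-\gamma)$ contributions into a single vector $f^\pi(\cdot,\gamma)$ gives $V^\pi=\frac{\rho^\pi}{1-\gamma}+h^\pi+f^\pi(\cdot,\gamma)$ with $f^\pi(s,\gamma)\to0$ as $\gamma\to1$, which is the claim.

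The main obstacle is the spectral splitting that underlies the expansion: one must justify that the singularity of $(I-\gamma P^\pi)^{-1}$ at $\gamma=1$ is a \emph{simple} pole with residue exactly $P^*$, rather than a higher-order pole. This rests on the fact that the eigenvalue $1$ of the stochastic matrix $P^\pi$ is semisimple (its algebraic and geometric multiplicities coincide, i.e.\ no nontrivial Jordan block), which is what makes $P^*$ a well-defined projection and $I-P^\pi$ invertible on $\operatorname{range}(I-P^*)$. Once this is in place, convergence of the Neumann series for $|1-\gamma|$ small is controlled by the subdominant eigenvalues of $P^\pi$ and is routine; and since we only need $f^\pi\to0$, a first-order expansion with an $O(1-\gamma)$ remainder bound suffices, so the full Laurent series is not actually required.
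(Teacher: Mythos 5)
The paper offers no proof of this lemma at all---it is imported directly from \citet{puterman2014}---and your argument is precisely the standard proof given in that reference: the Laurent expansion $(I-\gamma P^\pi)^{-1}=\frac{1}{1-\gamma}P^*+H+O(1-\gamma)$ of the resolvent, with $P^*R^\pi$ identified as the gain and $HR^\pi$ as the bias, justified by semisimplicity of the eigenvalue $1$ (which, as you note, is what rules out a higher-order pole, and which follows from boundedness of the powers $(P^\pi)^t$). Your proposal is correct and coincides with the cited source's approach; the only caveat worth recording is that for periodic chains the series form $H=\sum_{t\ge 0}\bigl((P^\pi)^t-P^*\bigr)$ and your term-by-term identification $HR^\pi=h^\pi$ must be read in the Ces\`aro/Abel sense, but this is exactly the same convergence caveat already implicit in the paper's own series definition of $h^\pi$, so your proof is faithful to the paper's conventions.
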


As a result of $ v_\Sigma^\pi = (1-\gamma)V^\pi_\Sigma $ shown in Lemma~\ref{lem:discounted value relationship} for the discounted case, we can get the equivalence between the traditional average reward and the new defined value function when considering the average-reward MDP. 
\begin{lemma} \label{lemma:average equivalence}
    If $\gamma=1$, then for a given $\pi$,
    \[ \rho^\pi = \frac{v^\pi_\Sigma}{n} .\] 
\end{lemma}
Lemma~\ref{lemma:average equivalence} establishes the connection between the new value function and the average reward. In this way, the value function is still valid when applying $\gamma=1$, and the average reward can be explicitly expressed by the value. 

Let $\gamma=1$ in our new value function in Equation~\eqref{equ:new v definition}. Since $C=n$, Lemma~\ref{lemma:average equivalence}, yields 
\[
\frac{v^\pi(s)}{C} \; +\;  \rho^\pi\;=\; r^{\pi(s)} \;+\;\sum_{i=1}^n p_i^{\pi(s)} \frac{v^\pi(i)}{C}.
\]
This new value function with MDP constant $C$ can satisfy the Bellman equation in the average reward MDP:
\[
T^\pi \frac{v^\pi(s)}{C} \;= \; r^{\pi(s)} \;+\; \sum_{i=1}^n p_i^{\pi(s)} \frac{v^\pi(i)}{C} = \frac{v^\pi(s)}{C}\; + \;\rho^\pi .
\]
Our new value function $\frac{v^\pi(s)}{C}$ is unique to an additive constant, hence $\frac{v^\pi(s)}{C} \in \{h^\pi+c\mathbbm{1}\mid c\in\mathbbm{R}\}$. We then build an exact relationship between the new value function and the old relative value function.

\begin{lemma} \label{lem:relative value function relationship}
    For a given policy $\pi$, there exists
    \begin{align*}
        h^\pi(s) = \frac{v^\pi(s)}{n} - \frac{\rho^\pi}{n},
    \end{align*}
    with $\sum_s  h^\pi(s) = 0$.
\end{lemma}
In the average-setting, the bias function is defined only up to an additive constant, since if $h^\pi$ is a solution of the Bellman equation, then $ h^\pi+c\mathbbm{1} $ is also a solution for any constant $c$. 
In contrast, the proposed value $v^\pi$ is defined via the linear system and uniquely determined, as shown in Lemma~\ref{lem:uniqueness for new v}. 
The relation in Lemma~\ref{lem:relative value function relationship} therefore identifies the unique representative of the affine solution set $\{h^\pi+c\mathbbm{1}\mid c\in\mathbbm{R}\}$ that satisfies the condition $\sum_{s} h^\pi(s)=0$.

We now turn to the action-level characterization. In particular, we show that the inner product between the action vector and the policy vector recovers the advantage function under policy $\pi$ in the average-reward setting.
\begin{lemma}\label{lem:average reward case geometric equivalence with adv}
    In the average reward case, the inner product of the action vector $a^+$ and the policy vector $v^\pi_+$ is equal to the advantage function with respect to policy $\pi$:
\begin{align*}
a^+v^\pi_+ &= r^a+ \sum_i c^a_iv^\pi(i) = \adv{a}{\pi}.
\end{align*}
\end{lemma}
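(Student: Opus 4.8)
The plan is to prove the identity by a direct algebraic comparison, leaning on the two facts established immediately above the statement: at $\gamma=1$ we have $C=n$, the quantity $v^\pi(s)/C$ is a member of the bias solution set $\{h^\pi + c\mathbbm{1}\mid c\in\mathbb{R}\}$, and $\rho^\pi = v^\pi_\Sigma/C$ by Lemma~\ref{lemma:average equivalence}. I would start from the average-reward definition of the advantage, $\adv{a}{\pi} = r^a - \rho^\pi + \sum_i p^a_i\, h^\pi(i) - h^\pi(\st(a))$, substitute these expressions, and aim to match the explicit inner product $r^a + \sum_i c^a_i v^\pi(i)$ term by term.

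The one conceptual point to settle first is that the advantage formula is invariant under replacing $h^\pi$ by any other representative $h^\pi + c\mathbbm{1}$ of the solution set. This holds because the shift $c$ enters the formula as $+c\sum_i p^a_i$ through the transition term and as $-c$ through the $-h^\pi(\st(a))$ term; since $\sum_i p^a_i = 1$, the two contributions cancel. Consequently I may substitute the specific representative $h^\pi(i) = v^\pi(i)/n$ directly into the advantage, obtaining $\adv{a}{\pi} = r^a - v^\pi_\Sigma/n + \tfrac{1}{n}\sum_i p^a_i v^\pi(i) - v^\pi(\st(a))/n$, where I have also used $\rho^\pi = v^\pi_\Sigma/n$.

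It then remains to expand the inner product on the other side. Setting $\gamma=1$ in the definition of $c^a_i$ gives $c^a_i = (p^a_i - 1)/n$ for $i \ne \st(a)$ and $c^a_i = (p^a_i - 2)/n$ for $i = \st(a)$, which I would write uniformly as $c^a_i = (p^a_i - 1)/n - \mathbbm{1}[i=\st(a)]/n$. Summing against $v^\pi$ and using $\sum_i v^\pi(i) = v^\pi_\Sigma$ yields $\sum_i c^a_i v^\pi(i) = \tfrac{1}{n}\sum_i p^a_i v^\pi(i) - v^\pi_\Sigma/n - v^\pi(\st(a))/n$, so that $r^a + \sum_i c^a_i v^\pi(i)$ coincides exactly with the advantage expression from the previous paragraph. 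I do not expect a genuine obstacle here, since the argument reduces to a finite computation; the only step that requires care is the constant-shift invariance, which is what legitimizes using $v^\pi/n$ as the bias even though it is pinned down only up to an additive constant.
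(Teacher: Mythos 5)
Your proof is correct and takes essentially the same approach as the paper's: both rest on the two facts established just before the lemma, namely $\rho^\pi = v^\pi_\Sigma/C$ and that $v^\pi/C$ is a representative of the bias solution set, with the additive constant cancelling through $\sum_i p^a_i = 1$. The only cosmetic differences are that you argue from the advantage toward the inner product rather than the reverse, and you state the constant-shift invariance explicitly where the paper cancels the $+c$ and $-c$ terms inline.
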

In this section, we proposed that the newly defined value function remains well-defined in both the discounted and average-reward setting. Under the unichain assumption, the vector $v^\pi/n$ satisfies the average-setting Bellman equation and therefore corresponds to the canonical bias function.
Moreover, the geometric inner product between action vectors and policy vectors coincides exactly with the classical advantage function.
This established the consistency of the proposed value function across the discounted and average reward cases.

In the following, we provide the proofs of the lemmas in this section.

\begin{proof}[Proof of Lemma~\ref{lem:discounted value relationship}]
    If $\gamma < 1$, we can subtract the quantity $\frac{\gamma v^\pi_\Sigma}{C(\gamma-1)} $ from both sides of the value equation Equation~\ref{equ:new v definition}, which leads to: 

\begin{align*}
\frac{v^\pi(s)}{C} - \frac{\gamma v^\pi_\Sigma}{C(\gamma-1)} &= r^{\pi(s)} - \frac{\gamma v^\pi_\Sigma}{C} - \frac{\gamma  v^\pi_\Sigma}{C(\gamma-1)} + \gamma \sum_i p_i^{\pi(i)} \frac{v^\pi(i)}{C}\\
&= r^{\pi(s)} - \frac{\gamma v^\pi_\Sigma((\gamma-1) + 1)}{C(\gamma-1)}  + \gamma \sum_i p_i^{\pi(i)} \frac{v^\pi(i)}{C}  \\
&= r^{\pi(s)}  + \gamma \sum_i p_i^{\pi(i)} \left(\frac{v^\pi(i)}{C}- \frac{\gamma v^\pi_\Sigma}{C(\gamma-1)} \right).
\end{align*}

Therefore, the values $\frac{v^\pi(s)}{C} - \frac{\gamma v^\pi_\Sigma}{C(\gamma-1)}$ satisfy the classic Bellman equation and, due to uniqueness, 
\begin{align*} 
    V^\pi(s) = \frac{v^\pi(s)}{C} - \frac{\gamma v^\pi_\Sigma}{C(\gamma-1)}=
\frac{(1-\gamma)v^\pi(s)}{C(1-\gamma)} + \frac{\gamma v^\pi_\Sigma}{C(1-\gamma)},  
\end{align*}
which indicates the relationship between the old value function $V^\pi$ and the new defined value $v^\pi$ in the discounted case. Then, we can sum this expression over all states to get the sum relationship:
\begin{align*} 
   \sum_s V^\pi(s) = \sum_s \left[ \frac{(1-\gamma)v^\pi(s)}{C(1-\gamma)} + \frac{\gamma v^\pi_\Sigma}{C(1-\gamma)} \right] = \frac{(1-\gamma)v^\pi_\Sigma}{C(1-\gamma)} + \frac{\gamma nv^\pi_\Sigma}{C(1-\gamma)} = \frac{v^\pi_\Sigma}{1-\gamma}. 
\end{align*}
Defining the mean values as $\bar{V}^\pi = (\sum_i V^\pi(s))/n$ and similarly for $\bar{v}^\pi$, we obtain
\begin{align*}
V^\pi(s) - \bar{V}^\pi &= V^\pi(s) - \frac{\sum_i V^\pi(i)}{n} = 
\frac{(1-\gamma)v^\pi(s)}{C(1-\gamma)} + \frac{\gamma v^\pi_\Sigma}{C(1-\gamma)} - 
\frac{v^\pi_\Sigma}{(1-\gamma)n}\\
&=\frac{v^\pi(s)}{C} + \frac{\gamma n v^\pi_\Sigma}{Cn(1-\gamma)} - 
\frac{v^\pi_\Sigma (n\gamma + (1-\gamma))}{Cn(1-\gamma)} =\\
&=\frac{1}{C}(v^\pi(s) - \bar{v}^\pi).
\end{align*}

\end{proof}

\begin{proof}[Proof of Lemma~\ref{lem:discounted case equivalence}]
We work with the expression $\sum_i - c^a_iv^\pi(i)$, which is the height at which the hyperplane $\mathcal{H}^\pi$ crosses the vertical line with coordinates the same as $a$,

\begin{align*}
\sum_i -c^a_iv^\pi(i) &= \sum_i -c^a_iv^\pi(i) + \bar{v}^\pi - \bar{v}^\pi=
\bar{v}^\pi + \sum_i -c^a_i (v^\pi(i) - \bar{v}^\pi) \\
&=(1-\gamma)\bar{V}^\pi + C\sum_i -c^a_i (V^\pi(i) - \bar{V}^\pi) \\
&= (1-\gamma)\bar{V}^\pi + (V^\pi(s) -\bar{V}^\pi)
+ \sum_i (\gamma - \gamma p^a_i) (V^\pi(i) - \bar{V}^\pi)\\
&=-\sum_i \tilde{c}^a_iV^\pi(i).
\end{align*}
\end{proof}

\begin{proof}[Proof of Lemma~\ref{lemma:average equivalence}] \label{proof: lem average equivalence}
    From Lemma~\ref{lemma:old equivalence} we have
    \begin{align*}
        v^{\pi}_{\Sigma}=(1-\gamma) \sum_{s} V^\pi(s) &= \sum_{s} \rho^\pi(s) + (1-\gamma)\sum_{s}h^\pi(s)+(1-\gamma)\sum_{s}f^\pi(s,\gamma).
    \end{align*}
    If $\gamma=1$, the last two terms converge to $0$. Then $v^{\pi}_{\Sigma} = \sum_{s} \rho^\pi$ and $C =|S|=n$. Therefore,
    \[ \rho^\pi = \frac{v^\pi_\Sigma}{n} = \frac{v^\pi_\Sigma}{C} .\]
\end{proof}

\begin{proof}[Proof of Lemma~\ref{lem:relative value function relationship}] \label{proof:lem relative value function relationship}

For the relative function $h^\pi(s)$, by the equivalence from Lemma~\ref{lemma:old equivalence} and the equivalence for value function in discounted MDP cases $V^\pi(s) = \frac{v^\pi(s)}{C} - \frac{\gamma v^\pi_\Sigma}{C(\gamma-1)}$, there is
\begin{align} \label{equ: relative equ1}
    h^\pi(s) \;=&\;V^\pi(s) - \frac{\rho^\pi(s)}{1-\gamma}  - f^\pi(s,\gamma) \\ \nonumber
    =&\; \frac{v^\pi(s)}{C} - \frac{\gamma v^\pi_\Sigma}{C(\gamma-1)} 
    - \frac{\rho^\pi(s)}{1-\gamma} 
    - f^\pi(s,\gamma) \\ \nonumber
    =&\; \frac{v^\pi(s)}{C} + \frac{\frac{\gamma v^\pi_\Sigma}{C} - \rho^\pi(s)}{1-\gamma}  
     - f^\pi(s,\gamma) 
\end{align}
When taking the limit for $\gamma\to1$, we need to think about the convergence rate for the term $ \frac{\gamma v^\pi_\Sigma}{C}$ as $C = n\gamma+1-\gamma$ is still a function of $\gamma$. Define $ \phi(\gamma) = \frac{\gamma v^\pi_\Sigma}{n\gamma+1-\gamma} $. According to Taylor's expansion around $1$, there is
$$ \phi(\gamma) = \phi(1) + \phi'(1)(\gamma-1) + \mathcal{O}(\gamma-1), $$
where $\phi(1) = \frac{ v^\pi_\Sigma}{n} = \rho^\pi$ and $ \phi'(1) = \frac{(n\gamma+1-\gamma)v^\pi_\Sigma - \gamma(n-1)v^\pi_\Sigma}{(n\gamma+1-\gamma)^2} = \frac{v^\pi_\Sigma}{n^2} $. Plugging this expansion into~\eqref{equ: relative equ1} for $\gamma\to 1$, the equation reduces to
\begin{align*}
    h^\pi(s)=&\; \frac{v^\pi(s)}{n}+\lim_{t\to1}\frac{\phi'(1)(\gamma-1) + \mathcal{O}(\gamma-1)}{1-\gamma}.
\end{align*}
Since $\lim_{\gamma\to1}\frac{\mathcal{O}(\gamma-1)}{1-\gamma} = 0$, we obtain $ h^\pi(s)=\frac{v^\pi(s)}{n}-\phi'(1)= \frac{v^\pi(s)}{n}-\frac{\rho^\pi}{n}.$ $\phi'(1)$ is a constant and $\sum_{s} h^\pi(s) = 0 $. 

\end{proof}

\begin{proof}[Proof of Lemma~\ref{lem:average reward case geometric equivalence with adv}]
    For $\gamma=1$ we have $C=n$ and $\frac{v^\pi(s)}{n} = h^\pi (s) + \frac{\rho^\pi}{n}$. Let $c:=\frac{\rho^\pi}{C}$, then, 
\begin{align*}
     a^+v^\pi_+ & = r^a+ \sum_i c^a_iv^\pi(i)\\
     &=r^{a} - \frac{ v^\pi_\Sigma}{C} + \sum_{i=1}^n p_i^{a} \frac{v^\pi(i)}{C} - \frac{v^\pi(s)}{C}\\
     &= r^{a} - \rho^\pi + \sum_{i=1}^n p_i^{a} \left( h^\pi(i) + c\right) - \left( h^\pi(s) + c\right) \\
     &=r^{a} - \rho^\pi +\sum_{i=1}^n p_i^{a}  h^\pi(i) -h^\pi(s)\\
     & = \adv{a}{\pi}.
\end{align*}
\end{proof}

\section{Proof for Section \ref{sec:new geo interpr with lemmas}}
\label{APP:proof for section 3}
\begin{proof}[Proof of Lemma~\ref{lem:uniqueness for new v}]
($\Rightarrow $) Assume the Markov chain is unichain. Then, $\ker(I - P^\pi) = \mathrm{span}\{\mathbbm{1}\}$ for the (right) kernel. For any vector $x$, write $x = \alpha \mathbbm{1} + y$ with $\mathbbm{1}^\top y = 0$. Using $P^\pi \mathbbm{1} = \mathbbm{1}$ and $E y = (\mathbbm{1}^\top y)\mathbbm{1} = 0$,
\[
(I + E - P^\pi)(\alpha \mathbbm{1} + y)
= \underbrace{(I - P^\pi)\alpha \mathbbm{1}}_{=0}
  + \underbrace{E \alpha \mathbbm{1}}_{=\alpha n \mathbbm{1}}
  + (I - P^\pi) y
  + \underbrace{E y}_{=0}
= \alpha n \mathbbm{1} + (I - P^\pi) y .
\]
If $(I + E - P^\pi)(\alpha \mathbbm{1} + y)=0$, taking the component along $\mathbbm{1}$ gives $\alpha n = 0$, hence $\alpha=0$. Thus, $(I - P^\pi)y = 0$ with $\mathbbm{1}^\top y=0$. By $\ker(I - P^\pi)=\mathrm{span}\{\mathbbm{1}\}$, we get $y=0$. Hence, $x=0$ is the only solution, so $I + E - P^\pi$ is invertible.

($\Rightarrow $) Assume $I + E - P^\pi$ is invertible. Suppose, for contradiction, that the chain is not unichain. Then there are at least two closed irreducible classes, and hence $\dim \ker(I - P^\pi) \ge 2$. Therefore, there exists a nonzero $y \in \ker(I - P^\pi)$ with $\mathbbm{1}^\top y = 0$ (choose a nontrivial linear combination of basis vectors of the kernel with zero sum). For this $y$, we have $E y = 0$ and
\[
(I + E - P^\pi) y \;=\; (I - P^\pi) y \;+\; E y \;=\; 0,
\]
which contradicts invertibility. Hence, the chain must be unichain.

The final statement follows since invertibility of $I + E - P^\pi$ guarantees a unique solution $v^\pi$ to the corresponding linear system.
\end{proof}

\section{Proofs for Section \ref{sec:main result}  } \label{APP:main results}

In this section, we provide the proofs of Theorem \ref{thm:sync_no_lr} and Corollaries \ref{cor:iteration_complexity_disc} and \ref{cor:iteration_complexity_av}. We precede the proof of the theorem with the following lemma.

\begin{lemma} \label{lemma:product expansion}
    For any row-stochastic matrices $P,P_1,\cdots,P_T\in \mathbb{R}^{n\times n}$ and the all-ones matrix $E\in \mathbb{R}^{n\times n}$, there is 
    $$ \prod_{t=1}^T (P_t  -  E)= \prod_{t=1}^T P_t + E', $$
    where $E'$ is a matrix with identical rows and $E'v$ is a vector with identical entries for any vector $v$.
\end{lemma}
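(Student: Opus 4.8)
The plan is to prove this by induction on $N$, after isolating a small ``algebra of identical-row matrices'' that makes every step mechanical. First I would record the elementary facts I intend to lean on. Writing $E=\mathbbm{1}\mathbbm{1}^\top$, a matrix $M$ has identical rows precisely when $M=\mathbbm{1}w^\top$ for some row vector $w^\top$, which is in turn equivalent to $Mv=(w^\top v)\mathbbm{1}$ being a constant vector for every $v$. Hence the two assertions in the statement---that $E'$ has identical rows, and that $E'v$ has identical entries for all $v$---are the \emph{same} condition, so I need only track the former. The set of such matrices is a linear subspace (closed under sums and scalar multiples) and, crucially, is closed under right-multiplication by an arbitrary matrix $A$, since $\mathbbm{1}w^\top A=\mathbbm{1}(A^\top w)^\top$. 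Finally, every stochastic matrix $Q$ satisfies $Q\mathbbm{1}=\mathbbm{1}$, so $QE=Q\mathbbm{1}\mathbbm{1}^\top=E$, and a product of stochastic matrices is again stochastic.

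The base case $N=1$ is immediate: $P_1-E=P_1+(-E)$ with $E'=-E=\mathbbm{1}(-\mathbbm{1}^\top)$ of identical rows. For the inductive step I would assume $\prod_{t=1}^{N-1}(P_t-E)=\prod_{t=1}^{N-1}P_t+E'_{N-1}$ with $E'_{N-1}$ of identical rows, peel off the \emph{last} factor, and expand
\[
\Bigl(\textstyle\prod_{t=1}^{N-1}P_t+E'_{N-1}\Bigr)(P_N-E)
=\underbrace{\textstyle\prod_{t=1}^{N}P_t}_{\text{main term}}
-\Bigl(\textstyle\prod_{t=1}^{N-1}P_t\Bigr)E
+E'_{N-1}P_N-E'_{N-1}E .
\]
The first correction equals $-E$ because $\prod_{t=1}^{N-1}P_t$ is stochastic, and the last two corrections have identical rows because $E'_{N-1}$ does and right-multiplication preserves this property; their sum is the desired $E'_N$, closing the induction.

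The only point that requires care---and the one I would flag as the potential pitfall rather than a genuine obstacle---is the \emph{direction} of multiplication: right-multiplying an identical-row matrix by anything keeps the rows identical, whereas left-multiplication generally does not unless the left factor fixes $\mathbbm{1}$. Peeling off the last factor $(P_N-E)$ rather than the first is exactly what keeps every correction term on the convenient (right) side, so that I never need a left-multiplication preservation statement and use the stochasticity of the $P_t$ only through the single identity $QE=E$. Beyond this bookkeeping I expect the argument to be entirely routine.
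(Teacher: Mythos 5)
Your proof is correct, but it proceeds by a genuinely different route from the paper's. The paper expands $\prod_{t=1}^N (P_t - E)$ in full as a signed sum over all $2^N$ choices of positions where $E$ replaces a $P_t$, then disposes of every non-leading term via two absorption facts: stochastic factors to the left of an $E$ collapse through $PE = E$, and an $E$ followed by stochastic factors on the right yields an identical-row matrix; finally, sums of identical-row matrices again have identical rows. Your induction reaches the same conclusion from exactly the same two primitives ($QE = E$ for stochastic $Q$; closure of the set $\{\mathbbm{1}w^\top\}$ under right multiplication and linear combinations) while avoiding the combinatorial expansion and its bookkeeping altogether --- including sidestepping a harmless slip in the paper's version, which asserts $E^m = n^m E$ when in fact $E^2 = nE$ gives $E^m = n^{m-1}E$ (immaterial there, since only the identical-row property is used). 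Your observation that the lemma's two conclusions ($E'$ has identical rows; $E'v$ is constant for every $v$) are one and the same condition $E' = \mathbbm{1}w^\top$ usefully reduces what must be tracked. Your caution about the direction of multiplication is sound as stated, though worth sharpening: left multiplication by a \emph{stochastic} matrix not only preserves identical-row matrices but fixes them, since $Q\mathbbm{1}w^\top = \mathbbm{1}w^\top$; this is precisely the left-absorption the paper relies on, and your arrangement simply never needs it. One shared ambiguity to note: the lemma leaves the order of the noncommutative product unspecified (the application in Theorem~\ref{thm:sync_no_lr} uses the ordering $(P'_N - E)\cdots(P'_1 - E)$), but your induction is insensitive to the labelling provided you always peel the rightmost factor, which you do.
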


\begin{proof}[Proof of Lemma~\ref{lemma:product expansion}]

We first consider any row-stochastic matrix $P$, any matrix $H$ whose rows are all identical and equal to a row vector $h$, and any vector $v$. Let $E=\mathbf{1}\mathbf{1}^\top\in\mathbb{R}^{n\times n}$ be the all-ones matrix. Then:
\begin{itemize}
    \item $PE = E$.
    \item $EP$ is a matrix with identical rows. Moreover, writing $H = [\,h,\ldots,h\,]^\top$ with $h$ a row vector, we have
    $Hv = [\,hv,\ldots,hv\,]^\top$, i.e., a vector whose entries are all the same scalar $hv$.
    \item For any $m\in\mathbb{N}$, $E^m = n^{m} E$. Hence, $E^m v$ is a vector with identical entries.
\end{itemize}

Thus, for any $i_1,i_2,\cdots,i_k\in\{1,2,\cdots,n\}$,  we have
\begin{itemize}
    \item $ P_{i_1}P_{i_2} \cdots P_{i_k} E = E $ as $PE = E$.
    \item The matrix $ E P_{i_1}P_{i_2} \cdots P_{i_k} $ has identical rows. We note $EP_{i_1} = [h,\cdots,h]^\top$ since it has identical rows $h$. Then, for $P_{i_2}=[v_1,\cdots,v_n]$ with column vectors $v_1,\cdots,v_n $, $EP_{i_1}P_{i_2} = [h,\cdots,h]^\top [v_1,\cdots,v_n] $ shows that $EP_{i_1}P_{i_2}$ has identical rows $ [hv_1, hv_2, \cdots, hv_n]$. By this induction, we obtain the result. 
    \item $  P_{i_1}P_{i_2} \cdots P_{i_j} E  P_{i_{j+1}}P_{i_{j+2}} \cdots P_{i_k} $ has identical rows. 
\end{itemize}

The product of matrices $\prod_{t=1}^T (P_t  -  E)$ can be expanded as
\[
\prod_{t=1}^T (P_t - E)
= \sum_{S \subseteq \{1, \dots, T\}} 
(-1)^{|S|} 
\left( 
\prod_{t \in S} E 
\prod_{t \notin S} P_t 
\right),
\]
where each subset \( S \subseteq \{1, \dots, T\} \) indicates the indices
where \( E \) is selected instead of \( P_t \).
The first term corresponds to \( \prod_{t=1}^T P_t \), and the following
terms 
\[
\sum_{k=1}^T (-1)^k 
\sum_{1 \le i_1 < \cdots < i_k \le T} 
P_T \cdots P_{i_k+1} E P_{i_k-1} \cdots P_{i_1+1} E P_{i_1-1} \cdots P_1
\]
contain one or more \( E \) factors. The product of $E$ with any stochastic matrices from the left can be simplified to $E$. Consequently, each term in the expansion that contains $k$ matrices $E$ can be reduced to $E^k$ multiplied by the remaining right-hand stochastic matrices, which all have identical rows as shown above. Moreover, the sum of matrices with identical rows also has identical rows, denoted as $E'$.

\end{proof}

With Lemma \ref{lemma:product expansion}, we are ready to prove the main theorem.

\begin{proof}[Proof of Theorem~\ref{thm:sync_no_lr}]
   First note that we analyze a normalized MDP. Therefore, all actions lie on or below the horizontal zero-level hyperplane or, in algebraic terms, SAPs participating in the unique optimal policy $\pi^*$ have $0$ rewards, and SAPs that do not participate have negative rewards.

    For clarity, we define $ \tilde{v}_{t} = \frac{v_{t}}{C} $. Assume $ a^*\in\pi^*$ and let $a := \arg\max_a \adv{a}{\tilde{v}_{t}} $ with $st(a^*)=st(a)=s$. Then, we have
    \begin{align*}
        \tilde{v}_{t+1}(s) = \tilde{v}_t(s)+ \adv{a}{\tilde{v}_t} = r^{a} + \gamma \sum_i (p_i^{a}-1) \tilde{v}_t(i)  \leq \gamma \sum_i (p_i^{a}-1) \tilde{v}_t(i)
    \end{align*}
    since all rewards in a normalized MDP are non-positive, and 
    \begin{align*}
        \tilde{v}_{t+1}(s) = \tilde{v}_t(s)+ \adv{a}{\tilde{v}_t} \geq \tilde{v}_t(s)+ \adv{a^*}{\tilde{v}_t} = \gamma \sum_i (p_i^{a^*}-1) \tilde{v}_t(i).
    \end{align*}
    since rewards of all optimal SAPs in a normalized MDP are 0. Then, there is 
    \begin{align*}
& \gamma \sum_i (p^{a^*}_{i} - 1)\tilde{v}_t(i) \;\leq\; \tilde{v}_{t+1}(i) \;\leq\; \gamma \sum_i (p^a_{i} - 1)\tilde{v}_t(i), \\[6pt]
& \Rightarrow \gamma (P^*-E) \tilde{v}_t \;\leq\; \tilde{v}_{t+1} \;\leq\; \gamma (P_t - E) \tilde{v}_{t} ,\\[6pt]
& \Rightarrow \gamma P^*\tilde{v}_t \;\leq\;\tilde{v}_{t+1}+ \gamma E \tilde{v}_{t} \;\leq\; \gamma P_t \tilde{v}_{t} ,\\[6pt]
& \Rightarrow \tilde{v}_{t+1}+ \gamma E \tilde{v}_{t} =\gamma P'_t \tilde{v}_{t}= \gamma \left[D_t P^* + (I-D_t) P_t\right] \tilde{v}_{t},
\end{align*}
where $E$ is the all-ones matrix and $P'_t:=D_t P^* + (I-D_t) P_t$ represents a coordinate-wise convex combination of the upper and lower bounds. The diagonal matrix $D_t \in \mathbb{R}^{n \times n}$ has the weights of this combinations, \(D_t(s,s) \in [0,1]\). When two entries are equal, i.e., the optimal action is chosen, we set $D_t(s,s)=1$.

Define the maximum advantage of non-optimal actions with respect to the optimal policy as $-\delta$:
\begin{align*}
    -\delta = \max_{a' \notin \pi^*} \adv{a'}{\pi^*}, \quad \delta > 0.
\end{align*} 
Hence,
\begin{align*}
    \adv{a'}{\pi^*}\leq -\delta < 0, \quad \forall a' \notin \pi^*.    
\end{align*}
Since $ r^a = \adv {a}{\pi^*} \leq -\delta <0$, we obtain a stricter upper bound for $\tilde{v}_{t+1}(s)$ for a state $s$ on which a non-optimal action is chosen:
\begin{align*}
\tilde{v}_{t+1}(s) &= r^a + \gamma \sum_i (p^a_{i}-1) \tilde{v}_t(i)\leq -\delta + \gamma \sum_i (p^a_{i}-1) \tilde{v}_t(i)  < \gamma \sum_i (p^a_{i}-1) \tilde{v}_{t}(i). \\[6pt]
\end{align*}
Then, for such a state
\begin{align*}
    \tilde{v}_{t+1}(s) + \gamma \tilde{v}_{t,\Sigma} &= \gamma \left(D_t(s,s) \sum_i p^{a^*}_{i} \tilde{v}_t(i) + (1-D_t(s,s))\sum_i p^{a}_{i} \tilde{v}_t(i) \right) \\
    &\leq -\delta +\gamma \sum_i (p^a_{i}-1) \tilde{v}_t(i) + \gamma \tilde{v}_{t,\Sigma} \\
    &\leq -\delta+\gamma\sum_i p^a_{i} \tilde{v}_t(i),
\end{align*}
which implies 
\[
\gamma D_t(s,s) \sum_i (p^{a^*}_i - p^a_i)\tilde{v}_t(i) \le -\delta.
\]

We have 
$$-\spanv{\tilde{v}_t} = \min_{i}\tilde{v}_t (i) - \max_{i}\tilde{v}_t(i) \leq \sum_i \left(p^{a^*}_{i} - p^a_{i}\right)\tilde{v}_t(i)\leq -\frac{\delta}{\gamma D_t(s,s)}\le -\frac{\delta}{\ D_t(s,s)},$$
where the first inequality follows from $\max_i\tilde{v}_t(i) \geq \sum_i p^a_{i}\tilde{v}_t(i)$ and $\min_i\tilde{v}_t(i) \leq \sum_i p^{a^*}_{i}\tilde{v}_t(i)$. Thus, for all states, $D_t(s,s)\ge \min\!\left[\frac{\delta}{\spanv{\tilde{v}_t}}, 1\right]
= \delta'$, which allows us to represent the update matrix $P'_t$ as
\begin{equation} \label{eq:P_decomposition}
P'_t = \tilde{P}'_t + \delta' P^*,
\end{equation}
where $\tilde{P}'_t$ is a substochastic matrix that varies with the
iteration $t$, and $\delta' P^*$ is a component that remains constant
across iterations.

Next, we analyze the values vector after $T=n^2$ iterations. Then,
\begin{align*}
\tilde{v}_{T} = \gamma^T \left(\prod_{t=1}^T (p'_t  -  E)\tilde{v}_0\right) = \gamma^T \prod_{t=1}^T p'_t \tilde{v}_0 - \gamma^T E' \tilde{v}_0
\end{align*}
by Lemma \ref{lemma:product expansion}.

Having his expression, let us represent the value at state $s$ after $t$ iterations as a weighted combination of the initial values $ \tilde{v}_0$. Denoting the $k$th entry of rows of $E'$ as $C_E^k$ we can write it as:

\begin{align*}
\tilde{v}_{T}(s) = \gamma^T \sum_i \lambda_i^s \tilde{v}_0(i) -  \gamma^T \sum_k C_E^k \tilde{v}_0(k).
\end{align*}

We now use the fact that the policy is unichain. Let us denote the set of states in the recurrent class as $\mathcal{S}_R$ and the set of transient states as $\mathcal{S}_T$. Then, we can rewrite the expression above as:
\begin{align*}
\tilde{v}_{T}(s) = \gamma^T \sum_{i \in \mathcal{S}_R} \lambda_i^s \tilde{v}_0(i) +  \sum_{j \in \mathcal{S}_C} \lambda_j^s \tilde{v}_0(j)  -  \gamma^T \sum_k C_E^k \tilde{v}_0(k).
\end{align*}

Next, let us take a closer look at the coefficients $\lambda_i^s$. They arise
from the product $\prod_{t=1}^T P'_t$ and correspond to the probability of
all paths from state $i$ to state $s$ of length exactly $T$. The decomposition
\eqref{eq:P_decomposition} allows us to represent this product as
$$
\prod_{t=1}^T P'_t = \tilde{P}'_{1\dots T} + (\delta' P^*)^T,
$$
which, in turn, implies that the probability of such a path is positive when
$i$ belongs to the recurrent class \citep{wielandt1950}, and is lower bounded by
$\phi = \left( \delta' P_{\rm min}^* \right)^T$, where $P_{\rm min}^*$ is the
minimum non-zero entry of $P^*$. We can then subtract $\phi$ from each
coefficient $\lambda_i^s$ to obtain:

\begin{align*}
\tilde{v}_{T}(s) = \gamma^T \sum_{i \in \mathcal{S}_R} (\lambda_i^s - \phi) \tilde{v}_0(i) + \gamma^T \phi \sum_{i \in \mathcal{S}_R}  \tilde{v}_0(i) +  \sum_{j \in \mathcal{S}_C} \lambda_j^s \tilde{v}_0(j)  -  \gamma^T \sum_k C_E^k \tilde{v}_0(k).
\end{align*}

Having this expression, we can derive upper and lower bounds on the entries of $\tilde{v}_{T}(s)$ in terms of the maximum and minimum entries of $ \tilde{v}_0(i)$. Combining the states back, we have:

\begin{align*}
\tilde{v}_{T}(s) \ge \gamma^T\left(  (1-\phi |\mathcal{S}_R|) \min_i \tilde{v}_0(i) + \phi \sum_{i \in \mathcal{S}_R}\tilde{v}_0(i) - \gamma^T \sum_k C_E^k \tilde{v}_0(k) \right)    
\end{align*}

and

\begin{align*}
\tilde{v}_{T}(s) \le \gamma^T\left(  (1-\phi |\mathcal{S}_R|) \max_i \tilde{v}_0(i) + \phi \sum_{i \in \mathcal{S}_R}\tilde{v}_0(i) - \gamma^T \sum_k C_E^k \tilde{v}_0(k) \right).   
\end{align*}

Subtracting one from another gives the bound on the span of $\tilde{v}_{T}(s)$:
\begin{align*}
\spanv{\tilde{v}_{T}(s)} \le  \gamma^T (1-\phi |\mathcal{S}_R|) \spanv{\tilde{v}_0(i)}
\end{align*}
Denoting $(1-\phi |\mathcal{S}_R|)$ as $\iota$ completes the proof of the theorem.
\end{proof}

We now proceed to the proofs of both Corollaries, which start with the following lemma:

\begin{lemma} \label{lem:rew_by_span}
For any SAP $a \in \pi_t$, where $\pi_t$ is a policy implied by values $v_t$
\[ r^a \ge - \gamma\frac{ \spanv{v_t}}{C}.\]
\end{lemma}

\begin{proof}
Denote the SAP chosen by the optimal policy in state $s$ by $a^*$. Since SAP $a$ is a greedy choice with respect to $v_t$, its advantage with respect to this value vector is larger, which yields:
\begin{align*}
0 &< \adv{a}{v_t} - \adv{a^*}{v_t} = r^a - r^{a^*} + \sum_i (c^a_i- c^{a^*}_i) v_t(i) \\
&\le r^a +  \gamma \frac{\spanv{v_t}}{C},
\end{align*}
where the last inequality follows from the fact that optimal actions have $0$ rewards in a normalized MDP and the redistribution argument applied to values $v_t(i)$ and coefficients $(c^a_i- c^{a^*}_i)$.
\end{proof}

This lemma allows us to prove the corollaries.

\begin{proof}[Proof of Corollary~\ref{cor:iteration_complexity_disc}]
Run the algorithm for
$$
t=\frac{\log(1/\epsilon)+\log(1/(1-\gamma))+\log(\spanv{v_0}/C)}
{\log(1/\gamma) + \log(1/\iota)/n^2}
$$
iterations.
By Theorem~\ref{thm:sync_no_lr}, after this number of iterations the obtained
vector $v_t$ satisfies
$$
\frac{\spanv{v_t}}{C} \le \epsilon (1-\gamma).
$$
Applying Lemma~\ref{lem:rew_by_span}, we obtain that for all SAPs in $\pi_t$
their rewards are lower bounded by $\gamma \epsilon (1-\gamma)$, which in turn
implies that the standard value function $V^{\pi_t}$ is lower bounded by
$\epsilon$.

Since normalization preserves differences in values between policies, the
same bound holds for the original MDP.
\end{proof}

The proof of the second corollary is almost identical to the first one.

\begin{proof}[Proof of Corollary~\ref{cor:iteration_complexity_av}]
Run the algorithm for
$$
t=\frac{\log(1/\epsilon)+\log(\spanv{v_0}/C)}
{\log(1/\iota)/n^2}
$$
iterations.
By Theorem~\ref{thm:sync_no_lr}, after this number of iterations the obtained
vector $v_t$ satisfies
$$
\frac{\spanv{v_t}}{C} \le \epsilon.
$$
Applying Lemma~\ref{lem:rew_by_span}, we obtain that for all SAPs in $\pi_t$
their rewards are lower bounded by $\epsilon$, which in turn implies that
the gain of this policy $\rho^{\pi_t}(s)$ is lower bounded by $\epsilon$
for any state $s$.

Since normalization preserves differences in gains between policies (via
the preservation of value differences and Lemma~\ref{lemma:average equivalence}),
the same bound holds for the original MDP.
\end{proof}

\end{document}